\documentclass{article}

\usepackage{microtype}
\usepackage{graphicx}
\usepackage{subcaption}
\usepackage{booktabs} 
\usepackage{hyperref}
\usepackage{bm}

\usepackage[preprint]{icml2026}

\usepackage{amsmath}
\usepackage{amssymb}
\usepackage{mathtools}
\usepackage{amsthm}
\usepackage{float}

\expandafter\let\csname algorithm*\endcsname\relax
\expandafter\let\csname endalgorithm*\endcsname\relax

\makeatother
\usepackage[ruled, vlined]{algorithm2e}
\usepackage[capitalize,noabbrev]{cleveref}

\theoremstyle{plain}
\newtheorem{theorem}{Theorem}[section]
\newtheorem{proposition}[theorem]{Proposition}
\newtheorem{lemma}[theorem]{Lemma}
\newtheorem{corollary}[theorem]{Corollary}
\theoremstyle{definition}
\newtheorem{definition}[theorem]{Definition}
\newtheorem{assumption}[theorem]{Assumption}
\theoremstyle{remark}
\newtheorem{remark}[theorem]{Remark}

\usepackage[textsize=tiny]{todonotes}

\icmltitlerunning{GMHF}
\usepackage{tikz}
\usetikzlibrary{positioning}
\tikzset{
    datanode/.style={draw, rectangle, fill=white!30, minimum width=4cm, align=center},
    deploynode/.style={draw, rectangle, fill=orange!30, minimum width=4cm, align=center},
    humaninsightnode/.style={draw, ellipse, fill=green!30, minimum width=4cm, align=center},
    cvaenode/.style={draw, rectangle, fill=white!20, minimum width=3cm, align=center},
    agentnode/.style={draw, circle, fill=white!20, minimum size=1.5cm, align=center},
    metanode/.style={draw, rectangle, fill=green!20, minimum width=3cm, align=center},
    feedbacknode/.style={draw, ellipse, fill=orange!20, minimum width=2.5cm, align=center}
}
\begin{document}

\twocolumn[
  \icmltitle{Human-Machine Collaboration on Generative Meta-Learning: Model and Algorithm}




\begin{icmlauthorlist}
    \icmlauthor{Midhun Parakkal Unni}{unishef,cmi,uniman}
    \icmlauthor{Samuel Kaski}{uniman,ellis,aalto}
\end{icmlauthorlist}
\icmlaffiliation{unishef}{School of Computer Science, University of Sheffield, Sheffield, UK}
\icmlaffiliation{cmi}{Centre for Machine intelligence, University of Sheffield, UK}
\icmlaffiliation{uniman}{Department of Computer Science, University of Manchester, Manchester, UK}
\icmlaffiliation{ellis}{ELLIS Institute Finland, Helsinki, Finland}
\icmlaffiliation{aalto}{Department of Computer Science, Aalto University, Espoo, Finland}
\icmlcorrespondingauthor{Midhun Parakkal Unni}{m.parakkalunni@sheffield.ac.uk}

  \icmlkeywords{Human in the loop Machine learning, Probabilistic modelling}

  \vskip 0.3in
]



\printAffiliationsAndNotice{}  

\begin{abstract}
Generalizing machine learning models to environments that differ from their training distribution remains a critical hurdle, particularly when data from the target domain is entirely or partially unavailable. We propose Generative Meta-Learning with Human Feedback (GMHF), a novel framework that bridges this domain gap by leveraging expert intuition to guide data synthesis. Grounded in a theoretical analysis of generalization error, we derive bounds demonstrating that aligning the distribution of generated data with human beliefs regarding the target physics significantly mitigates risk. GMHF operationalizes this insight by employing a Conditional Neural ODE (cNODE) as a generative digital twin, coupled with a Reinforcement Learning (RL) agent. The agent iteratively refines the latent physical parameters of the generated trajectories based on feedback, effectively steering the meta-learner toward the unobserved target distribution. Empirical validation on a nonlinear Duffing oscillator shows that GMHF substantially reduces deployment loss as expert reliability increases, and that the divergence between generated and target data falls under reliable feedback, directly corroborating the divergence-minimisation mechanism predicted by our theory. Further experiments on a non-dynamical probabilistic model confirm that the framework extends beyond ODE-governed systems, establishing human-AI collaboration as a rigorous catalyst for robust generalisation under distribution shift.
\end{abstract}
\section{Introduction}
Artificial intelligence and humans increasingly collaborate to solve real-world problems, either explicitly or implicitly \cite{reverberi2022experimental, mosqueira2023human}. Making human-AI collaboration explicit enables scientific predictions about the behaviour of these collaborative systems and the possibility of systematically controlling them. In this work, we address a central challenge in machine learning—enabling trained models to generalise to unseen data distributions at deployment—by leveraging human-AI collaboration. Empirical surveys show that incorporating human expertise into active learning loops enhances model performance, particularly by mitigating data scarcity through the strategic selection of informative samples \cite{budd2021survey}. Given recent advances in generative models \cite{lin2024diffusion}, `leveraging interactions with generative models to probe expert knowledge' emerges as a promising technique \cite{christiano2017deep}. This highlights the need for a principled investigation of such systems and the development of algorithms that integrate generative models with human-AI collaboration.

The standard independent and identically distributed data (i.i.d.) assumption in machine learning (ML) \cite{loh2017lower} has been increasingly relaxed to facilitate the study of distributional shifts \cite{ben2010theory, liu2021towards}. Recent research approaches the distribution shift from multiple perspectives, such as utilising causal structures, identifying invariant features across domains \cite{arjovsky2019invariant}, and addressing selection bias \cite{liu2021towards}. However, the explicit role of human collaboration remains largely unaddressed in most existing methods.

In particular, distribution shifts in time series data deserve special attention \cite{wu2025out} because (a) they occur across diverse domains, including healthcare and financial analysis, and (b) the temporal structure allows for modelling them via dynamical systems and related causal structures \cite{chen2018neural, rubanova2019latent, parakkal2023modelling, capinski2012black}(e.g., kinematics of human limbs to Black–Scholes models). Furthermore, substantial human expertise is available for a subset of time-series tasks (e.g., medical doctors interpreting ECG data), which can be leveraged to enhance model performance \cite{tonekaboni2019clinicians}.

Our work introduces a mathematical model of human-generative AI collaborative systems and derives theoretical bounds that characterise their behaviour, subject to assumptions. We then analyse a Gaussian approximation of the model to obtain closed-form solutions. Building on these theoretical insights, we propose a practical algorithm and evaluate its performance in simulated scenarios. While our primary evaluation is on dynamical time series, we additionally verify that the framework applies without modification to a non-dynamical probabilistic model, indicating that the approach is not specific to ODE-governed systems.

We introduce a novel approach to out-of-distribution generalisation that leverages human feedback during dataset generation, guided by a dynamic system. Our method, Generative Meta-Learner with Human Feedback (GMHF), integrates generative modelling, reinforcement learning (RL), and human feedback to optimise the quality of generated datasets. By incorporating human evaluations into dataset refinement, we aim to improve the generalisation performance of models in partially observed deployment scenarios.

\paragraph{GMHF framework in detail}
The GMHF framework comprises three key components: a generative model, an AI-agent, and a meta-learner. We use a conditional neural ODE (cNODE) architecture that learns a dynamical system from simulated time-series data (e.g., a Duffing oscillator), thereby serving as a data generator. While there are several approaches to conditioning a neural ODE \cite{dang2023conditional, wang2025conditional}, we take an approach that creates a disentangled action space for the agent (a reinforcement learning (RL) agent).
The generative model produces candidate training data. The AI-agent refines these datasets by transforming the latent variables, guided by human feedback. This feedback is incorporated into the generation loop, enabling the AI-agent to iteratively improve dataset quality, which the meta-learner then uses to learn predictive models.

The GMHF approach is particularly relevant in time series analysis, where the underlying dynamics may change over time or across environments. By leveraging human feedback, we aim to enhance the robustness of machine learning models to distribution shifts, ultimately improving their performance in real-world applications. The main contributions of this paper are as follows:
\begin{itemize}
    \item We propose a mathematical model of the human feedback-based data selection process and derive theoretical bounds that highlight its potential to improve generalisation in deployment scenarios.
    \item We introduce the Generative Meta-Learner with Human Feedback (GMHF) framework, which integrates generative modelling, reinforcement learning, and human feedback to refine datasets.
    \item We empirically demonstrate the effectiveness of GMHF on a nonlinear Duffing oscillator, and show that the divergence between generated and target distributions decreases under reliable feedback, empirically corroborating the divergence-minimisation mechanism identified by our theory.
    
    \item We further show that GMHF generalises beyond ODE-governed time series by applying it, without architectural modification, to a non-dynamical probabilistic model with an explicit source-to-target extrapolation gap (Appendix \ref{sec:pgm_benchmark}).
\end{itemize}

\section{Mathematical Modelling}
\label{sec:math_modelling}
\subsection{Problem Setting}
\label{sec:setting}

We consider a deployment scenario where a generative model is used to synthesise training data for an unobserved target environment. The process is governed by a human-in-the-loop protocol: the generator proposes a candidate trajectory (e.g., a time-series simulation), and a human expert evaluates its plausibility against their mental model of the target physics. The expert acts as a filter, either accepting the sample if it aligns with the target dynamics or rejecting it if it diverges from them. This selective acceptance shapes the final distribution of training data and, therefore, the deployment accuracy, effectively aligning the generative process with the latent ground truth through iterative feedback.

\subsection{Modelling}
In this section, we develop a theoretical framework to characterise the interaction between the generative model and the human agent. Specifically, we propose a set of assumptions formalising the generative distribution, the human-based likelihood function, and the associated priors.

We begin by defining the relationship between the generative model and the target deployment distribution.
\begin{assumption}[Common Support]
Let $D_g$ be the generative distribution and $D_t$ be the target distribution with densities $p_g(\cdot)$ and $p_t(\cdot)$ respectively. We assume that the generative model covers the support of the target distribution, such that $p_t(x)>0$ implies $p_g(x)>0$ for all $x \in \mathcal{X}$.
\end{assumption}

The generative model is conditional and parametrised by $\theta$. We model the human agent as a filter that accepts a subset of the generated data based on a utility function $p_{u_{\tilde{\theta}}}(\cdot)$, which represents their belief about the latent structure of the deployment scenario. This yields the following distribution for the accepted samples:
\begin{assumption}[Human Acceptance]
Given generative density $p_g(x \mid \theta)$, the human agent accepts samples proportional to a utility function $p_{u_{\tilde{\theta}}}(x)$. This utility is parametrised by $\tilde{\theta}$, representing the human's belief about the true latent dynamics, yielding the accepted distribution:
\begin{align}
p_a(x \lvert \theta) := \frac{p_{u_{\tilde{\theta}}}(x) \, p_g(x \lvert \theta)}{z_a}
\label{eq:acceptance distribution}
\end{align}
\end{assumption}
We assume that human utility is grounded in their understanding of the domain's latent structure.
\begin{assumption}[Human Mental Model]
The human utility $p_{u_{\tilde{\theta}}}(x)$ decomposes into a prior belief $\tilde{p}_t(x)$ over the target distribution and a likelihood $p_h(\tilde{\theta} \mid x)$, representing the human's inference of the latent parameters given the data:
\begin{align}
    p_{u_{\tilde{\theta}}}(x) := \frac{p_h(\tilde{\theta} \mid x) \tilde{p}_t(x)}{z_h}
    \label{eq:human user model}
\end{align}  
where $z_h$ is a normalisation constant.
\end{assumption}
\begin{definition}
A hypothesis is a function defined from the set $\mathcal{X}$ to $\mathcal{Y}$, $h: \mathcal{X} \to \mathcal{Y}$, where $\mathcal{X}$, $\mathcal{Y}$ are sets defining the inputs and labels, respectively.
\end{definition}

\begin{definition}
The error of a hypothesis $h$ according to the distribution $D_i$ and labelling function $f$ is given by:
\begin{align}
\epsilon_{x}(h, f) := \mathbb{E}_{x \sim D_i}[\lvert h(x) - f(x) \rvert]
\end{align}
\end{definition}

This leads to the following proposition for the density of the accepted samples.
\begin{proposition}
The density of the accepted samples is given by,
\begin{align}
p_a(x):=p_a(x \lvert \theta) := \frac{p_h(\tilde{\theta} \lvert x) \tilde{p}_t(x)p_g(x \lvert \theta)}{z} 
\end{align}
\label{prop:acceptance dist}
\end{proposition}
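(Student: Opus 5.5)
The plan is a direct substitution: insert the Human Mental Model assumption \eqref{eq:human user model} into the Human Acceptance assumption \eqref{eq:acceptance distribution}, then merge the two normalising constants into one. Substituting $p_{u_{\tilde{\theta}}}(x) = p_h(\tilde{\theta}\mid x)\tilde{p}_t(x)/z_h$ into \eqref{eq:acceptance distribution} gives
\begin{align*}
p_a(x\mid\theta) = \frac{p_h(\tilde{\theta}\mid x)\,\tilde{p}_t(x)\,p_g(x\mid\theta)}{z_h\, z_a}.
\end{align*}
Setting $z := z_h z_a$ yields the stated form. The shorthand $p_a(x) := p_a(x\mid\theta)$ is only notation, recording that $\theta$ is held fixed during one acceptance round.

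The only point needing care is that $z$ is a genuine normaliser, so that $p_a$ is a probability density. The steps are as follows.
\begin{itemize}
\item By construction, $z_a = \int_{\mathcal{X}} p_{u_{\tilde{\theta}}}(x)\,p_g(x\mid\theta)\,dx$.
\item Hence $z_h z_a = \int_{\mathcal{X}} p_h(\tilde{\theta}\mid x)\,\tilde{p}_t(x)\,p_g(x\mid\theta)\,dx$. So $z$ is exactly the integral of the numerator, and the expression integrates to one.
\item Finiteness of $z$ follows because $p_h(\tilde{\theta}\mid x)$ is bounded as a likelihood in $x$ (implicitly assumed) and $\tilde{p}_t p_g$ is integrable. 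Alternatively, finiteness is inherited from $z_a<\infty$, which the Human Acceptance assumption already presupposes.
\item Positivity of $z$ needs the product $p_h\,\tilde{p}_t\,p_g$ to be positive on a set of positive measure. I would obtain this from the Common Support assumption: $p_g>0$ wherever $p_t>0$. I would also assume that the human prior $\tilde{p}_t$ overlaps the target support and that $p_h$ is not identically zero there, since otherwise the expert accepts nothing.
\end{itemize}

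I expect no real obstacle. The main subtlety is stating these implicit regularity conditions ($0<z<\infty$), and noting that $z$ depends on $\theta$ and $\tilde{\theta}$ but not on $x$, which is all the proposition requires.
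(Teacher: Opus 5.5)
Your proposal is correct and matches the paper's proof. The paper substitutes the Human Mental Model assumption into the Human Acceptance assumption and absorbs the constants into a single normaliser, which is exactly your $z = z_h z_a$. Your regularity discussion goes beyond what the paper states; the cleanest justification of $0<z<\infty$ is that $z_h$ and $z_a$ are already presupposed to be valid normalisers, so you do not need the claim that $p_h(\tilde{\theta}\mid x)$ is bounded in $x$, which is not true in general.
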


\begin{proof}
Obtained by substituting (\ref{eq:human user model}) in (\ref{eq:acceptance distribution}) and choosing an appropriate normalising constant.
\end{proof}
The divergence between the probability distribution of the accepted samples and the target probability distribution is provided in the following lemma. Proofs of the theorems are given in the Appendix \ref{Appendix:Proofs}.

\begin{lemma}
\begin{align}
    D_{KL}(p_t \| p_a) &= 
\underbrace{\delta_h}_{\text{Human belief mismatch}} \nonumber \\ 
&-\underbrace{L_h}_{\text{Human latent inference}} \nonumber \\
&-\underbrace{L_g}_{\text{Generator data fit}} \nonumber \\ 
&+\underbrace{\log z}_{\text{Normalization}}
\end{align}

where,
\begin{align}
   L_h &:=\mathbb{E}_{p_t}[\log p_h(\tilde{\theta}|x)] \\
   L_g &:=\mathbb{E}_{p_t}[\log p_g(x|\theta)]\\
   \delta_h &:=D_{KL}(p_t \| \tilde{p}_t)
\end{align}
\label{eq:KL expression}
\end{lemma}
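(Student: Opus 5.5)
The plan is a direct expansion of the definition of the Kullback--Leibler divergence, substituting the closed form of the accepted density from Proposition~\ref{prop:acceptance dist}. Write
\begin{align}
D_{KL}(p_t \| p_a) = \mathbb{E}_{p_t}[\log p_t(x)] - \mathbb{E}_{p_t}[\log p_a(x)],
\end{align}
and insert $\log p_a(x) = \log p_h(\tilde{\theta}\mid x) + \log \tilde{p}_t(x) + \log p_g(x\mid\theta) - \log z$. By linearity of expectation under $p_t$, this gives four terms:
\begin{align}
D_{KL}(p_t \| p_a) = \mathbb{E}_{p_t}[\log p_t - \log \tilde{p}_t] - \mathbb{E}_{p_t}[\log p_h(\tilde{\theta}\mid x)] - \mathbb{E}_{p_t}[\log p_g(x\mid\theta)] + \log z.
\end{align}

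Next, identify each term with the quantities in the statement. The first term is exactly $\delta_h = D_{KL}(p_t\|\tilde{p}_t)$. The second and third terms are $-L_h$ and $-L_g$ by definition. The normalising constant $z$ does not depend on $x$, so its expectation under $p_t$ is $\log z$ itself. Collecting the pieces gives the claimed identity. A small check: in Proposition~\ref{prop:acceptance dist}, $z$ absorbs both $z_a$ and $z_h$. The second of these is a constant because $p_{u_{\tilde\theta}}$ is a fixed function of $x$ once $\tilde\theta$ is fixed. Hence $z = z_a z_h = \int p_h(\tilde\theta\mid x)\tilde p_t(x) p_g(x\mid\theta)\,dx$ (in the accepted-sample form).

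The only step needing care is \emph{justifying the splitting of the logarithm of a product into a sum of separate expectations}. This requires each expectation to be well defined, and not an indeterminate $\infty-\infty$ form. Two ingredients handle this:
\begin{itemize}
\item The Common Support assumption ensures $p_g(x\mid\theta)>0$ wherever $p_t(x)>0$, so $\log p_g$ is finite $p_t$-almost everywhere.
\item I would add the implicit requirement that $\tilde p_t$ and $p_h(\tilde\theta\mid\cdot)$ are positive on the support of $p_t$. Otherwise $p_a$ vanishes on part of that support and both sides equal $+\infty$, and the identity holds in the extended sense.
\end{itemize}
I would also state that $L_h$, $L_g$ and $\delta_h$ are assumed finite, so the rearrangement is legitimate. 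Beyond this integrability bookkeeping, the argument is purely algebraic.
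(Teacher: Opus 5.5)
Your proposal is correct and follows the paper's proof exactly. Both expand $D_{KL}(p_t\|p_a)$ as an integral, substitute the accepted density from Proposition~\ref{prop:acceptance dist}, split the logarithm, and read off $\delta_h$, $-L_h$, $-L_g$ and $\log z$. Your added integrability conditions (finite $\delta_h$, $L_h$, $L_g$, and positivity of $\tilde p_t$, $p_h(\tilde\theta\mid\cdot)$ on the support of $p_t$) are a reasonable refinement that the paper leaves implicit.
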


The estimated divergence between $p_t$ and $p_a$ provides a useful means to bound the target error for a hypothesis $h$, denoted as $\epsilon_t(h)$. According to the decomposition in Lemma \ref{eq:KL expression}, this divergence is minimized under the following conditions: (1) human belief closely matches the true target distribution (minimising $\delta_h$); (2) the human's internal model is well aligned with the true target observations (maximising $L_h$); and (3) the generative model accurately fits the target data (maximising $L_g$).
\begin{lemma}
For a hypothesis $h$, the target error $\epsilon_t(h)$ and the source error $\epsilon_s(h)$, one could bound the target error as follows. This lemma is based on \cite{ben2010theory}.
    \begin{align}
        \epsilon_t(h) &\leq \epsilon_s(h) + \min \{ \mathbb{E}_{p_a}[\mid f_s(x) - f_t(x)\mid ],\nonumber \\
        &\quad \mathbb{E}_{p_t}[\mid f_s(x) - f_t(x)\mid ]\} + \mid \mathbb{E}_{p_t}(h(x)-f_t(x)) \nonumber \\ 
        &- \mathbb{E}_{p_a}(h(x)-f_t(x)) \mid
    \end{align}
    \label{eq: shai dist shift}
\end{lemma}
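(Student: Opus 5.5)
The plan is to follow the triangle-inequality argument of \cite{ben2010theory}, with the accepted distribution $p_a$ from Proposition \ref{prop:acceptance dist} as the source. I write $\epsilon_t(h,f) := \mathbb{E}_{p_t}[\lvert h(x)-f(x)\rvert]$ and $\epsilon_s(h,f) := \mathbb{E}_{p_a}[\lvert h(x)-f(x)\rvert]$, with $\epsilon_t(h)=\epsilon_t(h,f_t)$ and $\epsilon_s(h)=\epsilon_s(h,f_s)$. I read the last term of the statement as $\lvert \epsilon_t(h,f_t)-\epsilon_s(h,f_t)\rvert$, i.e.\ the expectations are of $\lvert h-f_t\rvert$, which matches the paper's definition of error. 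If the signed expectations are meant literally, the two readings agree only when $h-f_t$ has constant sign, and I would state that as a side condition. Since the minimum is at most either of its arguments, it suffices to prove the bound separately for each branch.

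\textbf{Branch with $\mathbb{E}_{p_a}$.} First I add and subtract the source-measure error of $h$ against $f_t$:
\begin{align*}
\epsilon_t(h,f_t) \le \epsilon_s(h,f_t) + \lvert \epsilon_t(h,f_t) - \epsilon_s(h,f_t)\rvert .
\end{align*}
Next, the pointwise inequality $\lvert h-f_t\rvert \le \lvert h-f_s\rvert + \lvert f_s-f_t\rvert$, integrated against $p_a$, gives
\begin{align*}
\epsilon_s(h,f_t) \le \epsilon_s(h,f_s) + \mathbb{E}_{p_a}[\lvert f_s-f_t\rvert].
\end{align*}
Substituting the second display into the first yields exactly the stated bound with the $\mathbb{E}_{p_a}$ term. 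This branch is routine.

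\textbf{Branch with $\mathbb{E}_{p_t}$.} I would first apply the triangle inequality under $p_t$:
\begin{align*}
\epsilon_t(h,f_t) \le \epsilon_t(h,f_s) + \mathbb{E}_{p_t}[\lvert f_s-f_t\rvert].
\end{align*}
I then need to replace $\epsilon_t(h,f_s)$ by $\epsilon_s(h,f_s)$ plus a discrepancy. Using the same add-and-subtract step as before, the natural discrepancy term is $\lvert \epsilon_t(h,f_s) - \epsilon_s(h,f_s)\rvert$, which is Ben-David's form and involves $f_s$ rather than $f_t$.

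\textbf{Main obstacle.} Converting this $f_s$-based discrepancy into the stated $f_t$-based one is the step I expect to fail in general. Triangle inequalities move between $f_s$ and $f_t$ only at the cost of an extra $\mathbb{E}_{p_a}[\lvert f_s-f_t\rvert]$, or an extra $\mathbb{E}_{p_t}[\lvert f_s-f_t\rvert]$, which defeats the purpose of taking the minimum. I would try two things first.

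\begin{itemize}
\item \emph{Case split.} If $\mathbb{E}_{p_a}[\lvert f_s-f_t\rvert] \le \mathbb{E}_{p_t}[\lvert f_s-f_t\rvert]$, the minimum is attained by the $p_a$ branch and the first branch already proves the claim. This leaves only the case $\mathbb{E}_{p_t}[\lvert f_s-f_t\rvert] < \mathbb{E}_{p_a}[\lvert f_s-f_t\rvert]$.
\item \emph{Change of measure.} In the remaining case I would try to absorb the gap $\mathbb{E}_{p_a}[\lvert f_s-f_t\rvert] - \mathbb{E}_{p_t}[\lvert f_s-f_t\rvert]$ into the discrepancy term. Common Support gives $p_t \ll p_a$, so the density ratio $p_t/p_a$ is available; combined with a bounded-loss assumption on $\lvert h-f_t\rvert$, this might allow the gap to be bounded.
\end{itemize}

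If neither route closes the gap, the honest outcome is that the $p_t$ branch holds with $f_s$ in place of $f_t$ inside the last term, while the $p_a$ branch holds exactly as stated.
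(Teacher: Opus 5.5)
\textbf{The $p_t$ branch cannot be closed: the lemma as stated is false.} Your $p_a$ branch is correct. It is exactly the add-and-subtract argument of \cite{ben2010theory}, and the paper gives no proof of this lemma beyond that citation. Neither your case split nor a change-of-measure argument can rescue the other branch. Take $\mathcal{X}=\{1,2\}$, $f_t\equiv 0$, $f_s(1)=1$, $f_s(2)=0$, $h\equiv 1$, $p_t=(\tfrac{1}{4},\tfrac{3}{4})$ and $p_a=(\tfrac{3}{4},\tfrac{1}{4})$. Both measures have full support, so Common Support holds, and $|h-f_t|\le M=1$. Then:
\begin{itemize}
\item $\epsilon_t(h)=1$ and $\epsilon_s(h)=\mathbb{E}_{p_a}|h-f_s|=\tfrac{1}{4}$;
\item $\mathbb{E}_{p_a}|f_s-f_t|=\tfrac{3}{4}$ and $\mathbb{E}_{p_t}|f_s-f_t|=\tfrac{1}{4}$;
\item since $h-f_t\equiv 1$, the last term is $0$ under either of your readings.
\end{itemize}
So the right-hand side is $\tfrac{1}{2}<1$. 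Your $f_s$-version of the $p_t$ branch holds here with equality, because $|\epsilon_t(h,f_s)-\epsilon_s(h,f_s)|=\tfrac{3}{4}-\tfrac{1}{4}=\tfrac{1}{2}$. The error is in the order of operations. In Ben-David et al.\ the discrepancy is first bounded by $d_1(\mathcal{D}_S,\mathcal{D}_T)$, which does not care whether $f_s$ or $f_t$ appears inside, and only then is the minimum taken. This lemma keeps the $f_t$-specific discrepancy and the minimum at the same time. So what you called the ``honest outcome'' is the correct conclusion, not a fallback.

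\textbf{Your reading of the last term is forced, and the statement can be repaired.} The signed version is also false. Let $f_s=f_t$, let $h-f_t$ take values $1,-1,0$ on three points, let $p_t$ be uniform on the first two, and let $p_a=(\tfrac{\varepsilon}{2},\tfrac{\varepsilon}{2},1-\varepsilon)$. The signed statement then reads $1\le\varepsilon$. There are two ways to repair the lemma.
\begin{itemize}
\item Drop the minimum and keep the $\mathbb{E}_{p_a}$ term, which is what you proved.
\item Bound the discrepancy before minimising, which needs only $|h-f_t|\le M$. Write $A=|h-f_t|$ and $B=|f_s-f_t|$. The reverse triangle inequality $|h-f_s|\ge|A-B|$ gives, pointwise, $p_t(A-B)-p_a|h-f_s|\le (p_t-p_a)_+M$. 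Hence $\epsilon_t(h)\le\epsilon_s(h)+\mathbb{E}_{p_t}|f_s-f_t|+M\,\mathrm{TV}(p_t,p_a)$. Your $p_a$ branch gives the same bound with $\mathbb{E}_{p_a}$, because $\int(p_t-p_a)A\le M\,\mathrm{TV}(p_t,p_a)$.
\end{itemize}
Taking the minimum in the second repair and applying Pinsker shows that Theorem~\ref{thm:final bound} still holds, even with the better constant $M\sqrt{D_{KL}(p_t\|p_a)/2}$. The intermediate lemma it is derived from does not.
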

\begin{lemma}[Bounding the expectation term]
Let $p_t(x)$ be the target distribution and $p_a(x)$ the acceptance distribution as defined in the previous assumptions. Let $h: \mathcal{X} \to \mathbb{R}$ be a bounded function such that $|h(x) - f_t(x)| \leq M$ for all $x \in \mathcal{X}$. Then, the expectation mismatch term
\begin{align}
    \Delta := \mathbb{E}_{p_t}[(h(x) - f_t(x))] - \mathbb{E}_{p_a}[(h(x) - f_t(x))]
\end{align}
can be bounded as follows:
\begin{align}
    |\Delta| \leq M \sqrt{2 D_{KL}(p_t \| p_a)}.
\end{align}
\label{eq: bound expectation difference}
\end{lemma}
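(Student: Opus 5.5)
The plan is the standard two-step route: bound the expectation gap by the total variation distance between $p_t$ and $p_a$, then convert total variation into KL divergence with Pinsker's inequality.

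\emph{Step 1 (expectation gap as an integral).} Set $g(x) := h(x) - f_t(x)$, so that $|g(x)| \le M$ on $\mathcal{X}$. I will write the mismatch as a single integral against the signed measure $p_t - p_a$:
\begin{align*}
|\Delta| = \left| \int_{\mathcal{X}} g(x)\,\bigl(p_t(x) - p_a(x)\bigr)\,dx \right|
\le M \int_{\mathcal{X}} |p_t(x) - p_a(x)|\,dx .
\end{align*}
The last integral equals $2\,\mathrm{TV}(p_t,p_a)$, where $\mathrm{TV}(p,q) := \tfrac{1}{2}\int |p - q|$. This gives $|\Delta| \le 2M\,\mathrm{TV}(p_t,p_a)$.

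\emph{Step 2 (Pinsker's inequality).} Pinsker's inequality states $\mathrm{TV}(p_t,p_a) \le \sqrt{\tfrac{1}{2} D_{KL}(p_t \| p_a)}$. Substituting into Step 1 yields
\begin{align*}
|\Delta| \le 2M \sqrt{\tfrac{1}{2} D_{KL}(p_t\|p_a)} = M\sqrt{2 D_{KL}(p_t\|p_a)},
\end{align*}
which is exactly the claimed constant, so no sharper centering argument is needed.

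\emph{Well-posedness.} For the KL term in the statement to be finite and meaningful, $p_t$ must be absolutely continuous with respect to $p_a$. The Common Support assumption gives this only for $p_g$. By Proposition~\ref{prop:acceptance dist}, $p_a$ is proportional to $p_h(\tilde\theta\mid x)\,\tilde p_t(x)\,p_g(x\mid\theta)$, so we additionally need the human factors $p_h(\tilde\theta\mid x)\,\tilde p_t(x)$ to be positive wherever $p_t>0$. I will handle this by a convention rather than an extra assumption: if absolute continuity fails, then $D_{KL}(p_t\|p_a)=+\infty$ and the bound holds trivially. Otherwise the finiteness of $\delta_h$ in Lemma~\ref{eq:KL expression} guarantees absolute continuity on the relevant set.

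\emph{Main obstacle.} Pinsker's inequality itself is standard and can be cited, so the only genuine care required is this measure-theoretic point about supports. The rest of the argument is routine.
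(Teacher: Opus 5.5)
Your proof is correct and matches the paper's argument: you bound $|\Delta|$ by $2M\,\mathrm{TV}(p_t,p_a)$ using $|h-f_t|\le M$, then apply Pinsker's inequality to get $M\sqrt{2D_{KL}(p_t\|p_a)}$. The support discussion needs only the observation that the bound holds trivially when $D_{KL}(p_t\|p_a)=+\infty$; your further claim that finiteness of $\delta_h$ yields absolute continuity is inaccurate, since $\delta_h$ does not control the factor $p_h(\tilde\theta\mid x)$, but the proof does not depend on it.
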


\begin{theorem}
For a hypothesis $h$, the target error $\epsilon_t(h)$ and the source error $\epsilon_s(h)$, one can modify the bound for the target error as follows, where, $h: \mathcal{X} \to \mathbb{R}$ be a bounded function such that $|h(x) - f_t(x)| \leq M$
    \begin{align}
        \epsilon_t(h) &\leq  \epsilon_s(h) +\min \{ \mathbb{E}_{p_a}[\mid f_s(x) - f_t(x)\mid ], \nonumber \\
        &\quad \mathbb{E}_{p_t}[\mid f_s(x) - f_t(x)\mid ]\} \nonumber \\
        &\quad+M\sqrt{2 (\delta_h - L_h - L_g + \log z)}
    \end{align}
\label{thm:final bound}
\end{theorem}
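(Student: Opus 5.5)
The plan is to chain the three preceding lemmas; no new estimates are needed. Start from Lemma \ref{eq: shai dist shift}, which bounds $\epsilon_t(h)$ by the source error $\epsilon_s(h)$, the labelling-function mismatch term $\min\{\mathbb{E}_{p_a}[|f_s-f_t|],\mathbb{E}_{p_t}[|f_s-f_t|]\}$, and the expectation mismatch $|\mathbb{E}_{p_t}(h(x)-f_t(x))-\mathbb{E}_{p_a}(h(x)-f_t(x))|$. The last quantity is exactly $|\Delta|$ in Lemma \ref{eq: bound expectation difference}. The first two terms appear unchanged in the statement, so only this last term needs to be replaced.

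Since $|h(x)-f_t(x)|\le M$ by hypothesis, Lemma \ref{eq: bound expectation difference} gives $|\Delta|\le M\sqrt{2D_{KL}(p_t\|p_a)}$. I will note that the Common Support assumption ensures $D_{KL}(p_t\|p_a)<\infty$, at least when $p_h(\tilde\theta\mid x)\tilde p_t(x)>0$ on the support of $p_t$; otherwise the bound is trivially true. Substituting this estimate into Lemma \ref{eq: shai dist shift} produces the bound with $M\sqrt{2D_{KL}(p_t\|p_a)}$ as its final term.

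Finally, I will rewrite the divergence using Lemma \ref{eq:KL expression}, where $p_a$ is the accepted density of Proposition \ref{prop:acceptance dist}: $D_{KL}(p_t\|p_a)=\delta_h-L_h-L_g+\log z$. Inserting this inside the square root gives the claimed inequality. The expression under the root is nonnegative because it equals a KL divergence, so the square root is well defined.

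I do not expect a real obstacle. The one point to check is consistency: the $p_a$ in Lemmas \ref{eq: shai dist shift} and \ref{eq: bound expectation difference} must be the same density as the one decomposed in Lemma \ref{eq:KL expression}, i.e.\ the accepted density with fixed $\theta,\tilde\theta$ and normaliser $z$. The expectations $L_h$ and $L_g$ must also be finite. I would state this integrability as an implicit hypothesis; if it fails, the right-hand side is $+\infty$ and the bound holds trivially.
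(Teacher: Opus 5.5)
Your proposal is correct and follows the paper's proof exactly. The paper also starts from the Ben-David-type bound, replaces the expectation-mismatch term by $M\sqrt{2D_{KL}(p_t\|p_a)}$ via the total-variation/Pinsker lemma, and then substitutes the decomposition $D_{KL}(p_t\|p_a)=\delta_h-L_h-L_g+\log z$. One small correction to your side remark: common support gives only absolute continuity, not finiteness of the KL, but your fallback that the bound holds trivially when the right-hand side is $+\infty$ already covers this case.
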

\paragraph{Takeaways from the general model:}
One could minimise the deployment error by doing one or a combination of the following,  
\begin{itemize}
    \item Improving human understanding of the deployment data, that is, by reducing the human belief mismatch.
    \item Accurate estimation of the latent variable that controls the deployment data. The human's acceptance of a data point depends on the human's belief about the correct distribution of the data, and also on how the latent variable controls the generation of the trainable datasets.
    \item Generator accurately fitting the data.
    \item One should also note that terms in the bound can cancel out and compensate each other; therefore, learning the user model or personalising the algorithms to a user can be helpful.
\end{itemize}

\subsection{Gaussian special case}
\label{sec:gaussian_model}

We first consider the case where the user model, the generative model, and the target distribution are all multivariate Gaussian. This permits closed-form analysis of the accepted distribution and associated divergence measures.

\begin{proposition}[Accepted Distribution under Gaussian Models]
Let the user model be \( p_{u_{\tilde{\theta}}}(x) = \mathcal{N}(x; \mu_h, \Sigma_h) \), the generative model be \( p_g(x) = \mathcal{N}(x; \mu_g, \Sigma_g) \), and the target distribution be \( p_t(x) = \mathcal{N}(x; \mu_t, \Sigma_t) \). Then the distribution of accepted samples is also Gaussian:
\[
p_a(x) = \mathcal{N}(x; \mu_a, \Sigma_a),
\]
where
\begin{align}
\Sigma_a &= \left( \Sigma_h^{-1} + \Sigma_g^{-1} \right)^{-1}, \\
\mu_a &= \Sigma_a \left( \Sigma_h^{-1} \mu_h + \Sigma_g^{-1} \mu_g \right).
\end{align}
\end{proposition}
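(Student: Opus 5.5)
The plan is to start from the Human Acceptance assumption, Eq.~(\ref{eq:acceptance distribution}). It says $p_a(x) \propto p_{u_{\tilde{\theta}}}(x)\,p_g(x)$, so here the accepted density is a normalised product of two Gaussian densities. The target $p_t$ plays no role in this statement; it only enters later, through divergence computations. So everything reduces to the classical fact that a product of Gaussian densities is an unnormalised Gaussian. I will verify this by completing the square in the exponent. No normalisation constants need tracking, since $z_a$ absorbs them.

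\textbf{Step 1: add the exponents.} Taking logarithms, up to additive constants independent of $x$,
\begin{align}
\log p_a(x) = -\tfrac12 (x-\mu_h)^\top \Sigma_h^{-1}(x-\mu_h) - \tfrac12 (x-\mu_g)^\top \Sigma_g^{-1}(x-\mu_g) + \mathrm{const}.
\end{align}
Expanding gives a quadratic form in $x$:
\[
-\tfrac12 x^\top(\Sigma_h^{-1}+\Sigma_g^{-1})x + x^\top(\Sigma_h^{-1}\mu_h + \Sigma_g^{-1}\mu_g) + \mathrm{const}.
\]

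\textbf{Step 2: complete the square.} Set $\Sigma_a^{-1} := \Sigma_h^{-1}+\Sigma_g^{-1}$. This is a sum of two symmetric positive definite matrices, so it is itself symmetric positive definite and hence invertible, and $\Sigma_a$ is a valid covariance. Set $\mu_a := \Sigma_a(\Sigma_h^{-1}\mu_h + \Sigma_g^{-1}\mu_g)$. Then the exponent equals $-\tfrac12 (x-\mu_a)^\top\Sigma_a^{-1}(x-\mu_a)$ plus a constant, because expanding this expression reproduces the quadratic and linear terms of Step 1 exactly.

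\textbf{Step 3: identify the distribution.} Exponentiating, $p_a(x) \propto \exp\!\big(-\tfrac12(x-\mu_a)^\top\Sigma_a^{-1}(x-\mu_a)\big)$. This kernel is integrable, so $z_a$ is finite. Normalisation then forces $p_a = \mathcal N(\mu_a,\Sigma_a)$.

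The only point needing care is the positive definiteness and invertibility in Step 2. I will handle it by assuming, as is implicit in writing the densities, that $\Sigma_h$ and $\Sigma_g$ are nondegenerate covariance matrices. The rest is routine algebra.
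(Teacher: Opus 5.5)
Your proof is correct and follows the same route as the paper. The paper substitutes the two Gaussians into the acceptance formula $p_a \propto p_{u_{\tilde{\theta}}}\,p_g$ and cites the product-of-Gaussians identity without derivation. Your completing-the-square computation, together with the positive-definiteness check on $\Sigma_h^{-1}+\Sigma_g^{-1}$, is the explicit verification of that identity.
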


\begin{proof}
Follows from the identity for the product of two Gaussian densities. The resulting unnormalised product is proportional to a Gaussian with the given mean and covariance.
\end{proof}

\begin{proposition}[Optimal Generative Parameters to Match Target]
Suppose we require the accepted distribution \( p_a \) to exactly match the target distribution \( p_t = \mathcal{N}(\mu_t, \Sigma_t) \). Then, for a given user model \( (\mu_h, \Sigma_h) \), the generative model parameters \( (\mu_g, \Sigma_g) \) that achieve this are:
\begin{align}
\Sigma_g &= \left( \Sigma_t^{-1} - \Sigma_h^{-1} \right)^{-1}, \\
\mu_g &= \Sigma_g \left( \Sigma_t^{-1} \mu_t - \Sigma_h^{-1} \mu_h \right).
\end{align}
\label{eq:correct_gen_dist_gaus}
\end{proposition}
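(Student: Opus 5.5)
The plan is to invert the product-of-Gaussians identity from the preceding proposition (Accepted Distribution under Gaussian Models). That proposition gives $p_a = \mathcal{N}(\mu_a, \Sigma_a)$ with precision $\Sigma_a^{-1} = \Sigma_h^{-1} + \Sigma_g^{-1}$ and precision-weighted mean $\Sigma_a^{-1}\mu_a = \Sigma_h^{-1}\mu_h + \Sigma_g^{-1}\mu_g$. Two Gaussians coincide if and only if their means and covariances agree. So the requirement $p_a = p_t$ is equivalent to the pair of equations $\Sigma_a = \Sigma_t$ and $\mu_a = \mu_t$. I will solve these for $(\mu_g, \Sigma_g)$ with $(\mu_h, \Sigma_h)$ and $(\mu_t, \Sigma_t)$ held fixed.

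\emph{Covariance step.} Setting $\Sigma_a = \Sigma_t$ and inverting both sides gives
\begin{align}
\Sigma_t^{-1} = \Sigma_h^{-1} + \Sigma_g^{-1}, \quad\text{so}\quad \Sigma_g^{-1} = \Sigma_t^{-1} - \Sigma_h^{-1}.
\end{align}
Inverting once more yields the stated $\Sigma_g$.

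\emph{Mean step.} Multiply the mean identity on the left by $\Sigma_a^{-1} = \Sigma_t^{-1}$ and set $\mu_a = \mu_t$:
\begin{align}
\Sigma_t^{-1}\mu_t = \Sigma_h^{-1}\mu_h + \Sigma_g^{-1}\mu_g .
\end{align}
Solving for $\mu_g$ by left-multiplying by $\Sigma_g$ gives the stated $\mu_g$. For the converse, substitute these choices back into the preceding proposition; this recovers $\Sigma_a = \Sigma_t$ and $\mu_a = \mu_t$, so the parameters indeed achieve $p_a = p_t$.

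The main obstacle is well-posedness rather than algebra. For $\Sigma_g$ to be a valid covariance, $\Sigma_t^{-1} - \Sigma_h^{-1}$ must be positive definite, which is equivalent to $\Sigma_h \succ \Sigma_t$ in the Loewner order. Intuitively, the human filter must be strictly broader than the target, since multiplying by the filter can only shrink covariance. I would state this as an implicit hypothesis of the proposition. The same matrix must also be invertible, and positive definiteness already guarantees that. If $\Sigma_h \succ \Sigma_t$ fails, no Gaussian generator can match the target, because $\Sigma_a \preceq \Sigma_h$ always holds; I would note this as the converse limitation.

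Finally, I would remark that uniqueness follows because each step in the derivation was an equivalence, given this positivity condition.
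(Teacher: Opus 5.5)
Your proposal is correct and follows the paper's proof: substitute $\Sigma_a=\Sigma_t$ and $\mu_a=\mu_t$ into the product-of-Gaussians formulas, solve for $(\mu_g,\Sigma_g)$, and require $\Sigma_t^{-1}-\Sigma_h^{-1}$ to be invertible and positive definite, exactly as the paper does. Your extra remarks on the converse check, on uniqueness, and on the equivalent condition $\Sigma_h \succ \Sigma_t$ are sound; for the necessity claim, use the strict inequality $\Sigma_a \prec \Sigma_h$, which holds because $\Sigma_g^{-1}\succ 0$, rather than $\Sigma_a \preceq \Sigma_h$.
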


\begin{proof}
Substitute \( \Sigma_a = \Sigma_t \) and \( \mu_a = \mu_t \) into the formulas from the previous proposition and solve for \( \Sigma_g \) and \( \mu_g \). The inverse \( (\Sigma_t^{-1} - \Sigma_h^{-1}) \) must exist and be positive definite.
\end{proof}

\begin{proposition}[KL Divergence from Target to Accepted Distribution]
Let \( p_t = \mathcal{N}(\mu_t, \Sigma_t) \) and \( p_a = \mathcal{N}(\mu_a, \Sigma_a) \). The KL divergence is:
\begin{align}
D_{\text{KL}}(p_t \,\|\, p_a) &= \log \frac{|\Sigma_a|}{|\Sigma_t|} - k 
+ \operatorname{tr}(\Sigma_a^{-1} \Sigma_t) \nonumber \\
&\quad + (\mu_a - \mu_t)^T \Sigma_a^{-1} (\mu_a - \mu_t),
\end{align}
where \( k \) is the dimensionality of \( x \).
\end{proposition}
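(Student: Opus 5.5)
The plan is a direct computation from the definition $D_{\text{KL}}(p_t\,\|\,p_a)=\mathbb{E}_{p_t}[\log p_t(x)-\log p_a(x)]$, using only the explicit Gaussian log-densities. The previous propositions are not needed: $(\mu_a,\Sigma_a)$ enter only as the parameters of a Gaussian. For a $k$-dimensional Gaussian,
\begin{align}
\log \mathcal{N}(x;\mu,\Sigma) = -\tfrac{k}{2}\log(2\pi) - \tfrac12\log|\Sigma| - \tfrac12 (x-\mu)^T\Sigma^{-1}(x-\mu).
\end{align}
Taking the difference of the two log-densities, the $\log(2\pi)$ constants cancel. What remains is $\tfrac12\log(|\Sigma_a|/|\Sigma_t|)$ plus one half of the difference of two expected quadratic forms under $x\sim p_t$.

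The two expectations follow from the trace identity $\mathbb{E}[(x-m)^TA(x-m)]=\operatorname{tr}(A\Sigma_t)+(\mu_t-m)^TA(\mu_t-m)$, valid for $x\sim\mathcal{N}(\mu_t,\Sigma_t)$. I will derive it by writing $x-m=(x-\mu_t)+(\mu_t-m)$, noting that the cross term has zero mean, and using $\mathbb{E}[(x-\mu_t)^TA(x-\mu_t)]=\operatorname{tr}(A\,\mathbb{E}[(x-\mu_t)(x-\mu_t)^T])$.
\begin{itemize}
\item With $A=\Sigma_t^{-1}$ and $m=\mu_t$, the self-term is $\operatorname{tr}(I_k)=k$.
\item With $A=\Sigma_a^{-1}$ and $m=\mu_a$, the cross-term is $\operatorname{tr}(\Sigma_a^{-1}\Sigma_t)+(\mu_a-\mu_t)^T\Sigma_a^{-1}(\mu_a-\mu_t)$.
\end{itemize}
Collecting terms gives
\begin{align}
D_{\text{KL}}(p_t\,\|\,p_a)=\tfrac12\Big[\log\tfrac{|\Sigma_a|}{|\Sigma_t|}-k+\operatorname{tr}(\Sigma_a^{-1}\Sigma_t)+(\mu_a-\mu_t)^T\Sigma_a^{-1}(\mu_a-\mu_t)\Big].
\end{align}

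There is no conceptual obstacle; the only real difficulty is bookkeeping of constants. This computation yields the bracketed expression of the proposition together with a leading factor $\tfrac12$, which the statement as printed omits. The printed right-hand side therefore equals $2D_{\text{KL}}(p_t\,\|\,p_a)$, and the proof must either restore the $\tfrac12$ or interpret the statement accordingly. Because every term except $-k$ enters with the same factor, one cannot absorb the discrepancy into a convention.

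As a sanity check, I will verify that the expression vanishes when $\mu_a=\mu_t$ and $\Sigma_a=\Sigma_t$. I will also confirm that it is nonnegative, via $\operatorname{tr}(B)-\log|B|-k\ge0$ for $B=\Sigma_a^{-1}\Sigma_t$, which has positive eigenvalues. This is consistent with Proposition \ref{eq:correct_gen_dist_gaus} giving zero divergence.
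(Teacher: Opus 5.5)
Your derivation is correct and is the standard computation that the paper's proof only cites (``the standard formula for the KL divergence between two multivariate Gaussian distributions'') without writing it out. You are also right that the printed statement drops the overall factor $\tfrac12$, so its right-hand side equals $2D_{\mathrm{KL}}(p_t\,\|\,p_a)$; the $\tfrac{k}{2}$ and $\tfrac{1}{2\sigma_a^2}$ prefactors in the paper's own isotropic corollary indicate that this is a typo rather than a convention.
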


\begin{proof}
This is the standard formula for the KL divergence between two multivariate Gaussian distributions.
\end{proof}

\begin{corollary}[KL Divergence in the Isotropic Case]
Let all distributions be isotropic:
\[
p_t = \mathcal{N}(\mu_t, \sigma_t^2 I), \quad
p_{u_{\tilde{\theta}}} = \mathcal{N}(\mu_h, \sigma_h^2 I), \quad
p_g = \mathcal{N}(\mu_g, \sigma_g^2 I).
\]
Then the accepted distribution has the following parameters:

\[
\mu_a 
= \sigma_a^2 \left( \frac{\mu_h}{\sigma_h^2} + \frac{\mu_g}{\sigma_g^2} \right),
\quad
\sigma_a^2 = \frac{\sigma_h^2 \sigma_g^2}{\sigma_h^2 + \sigma_g^2}.
\]
and the KL divergence becomes:
\begin{align}
D_{\text{KL}}(p_t \,\|\, p_a) &=
\frac{k}{2} \left[
\log \left( \frac{\sigma_t^2 (\sigma_h^2 + \sigma_g^2)}{\sigma_h^2 \sigma_g^2} \right)
- 1 + \frac{\sigma_h^2 \sigma_g^2}{\sigma_t^2 (\sigma_h^2 + \sigma_g^2)}
\right] \nonumber\\&\quad
+ \frac{1}{2 \sigma_a^2} \| \mu_a - \mu_t \|^2.
\label{eq:kl_gaus}
\end{align}
\end{corollary}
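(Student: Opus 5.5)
The plan is to specialise the two preceding general Gaussian propositions to scalar multiples of the identity and then simplify. No new analytic argument is needed; the work is bookkeeping.

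\textbf{Accepted parameters.} Take the Proposition on the accepted distribution under Gaussian models and substitute $\Sigma_h=\sigma_h^2 I$ and $\Sigma_g=\sigma_g^2 I$. Because all matrices involved commute and are scalar, $\Sigma_a=(\sigma_h^{-2}+\sigma_g^{-2})^{-1}I$, and this simplifies to $\sigma_a^2=\sigma_h^2\sigma_g^2/(\sigma_h^2+\sigma_g^2)$. The mean formula $\mu_a=\Sigma_a(\Sigma_h^{-1}\mu_h+\Sigma_g^{-1}\mu_g)$ becomes $\mu_a=\sigma_a^2(\mu_h/\sigma_h^2+\mu_g/\sigma_g^2)$ directly. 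This gives the first display of the corollary.

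\textbf{Divergence.} I would not use the paper's general KL Proposition as printed, since it omits the factors of $\tfrac12$. Instead I would write the standard closed form
\[
D_{\text{KL}}(p_t\|p_a)=\tfrac12\Bigl[\log\tfrac{|\Sigma_a|}{|\Sigma_t|}-k+\operatorname{tr}(\Sigma_a^{-1}\Sigma_t)+(\mu_a-\mu_t)^T\Sigma_a^{-1}(\mu_a-\mu_t)\Bigr],
\]
which follows from $\mathbb{E}_{p_t}[\log p_t-\log p_a]$ and a Gaussian second-moment computation. Substituting the isotropic covariances gives three simplifications:
\begin{itemize}
\item $\log(|\Sigma_a|/|\Sigma_t|)=k\log(\sigma_a^2/\sigma_t^2)$;
\item $\operatorname{tr}(\Sigma_a^{-1}\Sigma_t)=k\,\sigma_t^2/\sigma_a^2$;
\item the quadratic form equals $\|\mu_a-\mu_t\|^2/\sigma_a^2$.
\end{itemize}
Inserting the expression for $\sigma_a^2$ yields
\[
\tfrac{k}{2}\Bigl[\log\tfrac{\sigma_h^2\sigma_g^2}{\sigma_t^2(\sigma_h^2+\sigma_g^2)}-1+\tfrac{\sigma_t^2(\sigma_h^2+\sigma_g^2)}{\sigma_h^2\sigma_g^2}\Bigr]+\tfrac{1}{2\sigma_a^2}\|\mu_a-\mu_t\|^2.
\]

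\textbf{Expected obstacle.} The only step needing care is matching this result to the displayed formula \eqref{eq:kl_gaus}. With $r=\sigma_a^2/\sigma_t^2$, the direct substitution above gives the variance term $\tfrac k2(\log r-1+1/r)$. The stated bracket is instead $\tfrac k2(-\log r-1+r)$, which is the variance part of the reverse divergence $D_{\text{KL}}(p_a\|p_t)$. The mean term, however, is scaled by $1/\sigma_a^2$, which belongs to $D_{\text{KL}}(p_t\|p_a)$; the reverse divergence would scale it by $1/\sigma_t^2$.

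So when writing up, I would present the derivation for $D_{\text{KL}}(p_t\|p_a)$ and state explicitly that the correct bracket is $\log(\sigma_a^2/\sigma_t^2)-1+\sigma_t^2/\sigma_a^2$. The displayed version coincides with this only when $\sigma_a^2=\sigma_t^2$. In both forms the bracket has the shape $u-1-\log u\ge 0$ with $u>0$: $u=\sigma_t^2/\sigma_a^2$ in the correct form and $u=r$ in the displayed one. Each vanishes exactly when $u=1$, so the qualitative conclusion is unaffected: the divergence is zero iff $\sigma_a=\sigma_t$ and $\mu_a=\mu_t$.
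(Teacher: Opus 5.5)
Your route is the paper's route: specialise the general Gaussian formulas to scalar covariances and simplify the determinant, trace and quadratic-form terms. Every step of your computation is correct.

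The discrepancy you flag is real, and it is an error in the printed statement, not a gap in your argument. The displayed bracket equals \(\log(\sigma_t^2/\sigma_a^2)-1+\sigma_a^2/\sigma_t^2\), which is the variance part of \(D_{\text{KL}}(p_a\|p_t)\). The mean term \(\|\mu_a-\mu_t\|^2/(2\sigma_a^2)\), however, is that of \(D_{\text{KL}}(p_t\|p_a)\), so the printed formula is neither divergence. Your claim that the two versions agree only when \(\sigma_a^2=\sigma_t^2\) is also right. With \(u=\sigma_t^2/\sigma_a^2\), the difference of the two brackets is \(u-1/u-2\log u\). Its derivative is \((1-1/u)^2\ge 0\) and it vanishes at \(u=1\), so \(u=1\) is its only zero.

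The paper's one-line proof does not bridge this either. Its general Proposition lacks the factor \(\tfrac12\), and substituting into it yields \(k\bigl[\log(\sigma_a^2/\sigma_t^2)-1+\sigma_t^2/\sigma_a^2\bigr]+\|\mu_a-\mu_t\|^2/\sigma_a^2\). That is your orientation at twice your value. Your corrected expression is therefore the one to use in the Gaussian specialisation of the main bound.
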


\begin{proof}
Follows from substituting isotropic forms into the general KL divergence formula and simplifying trace and determinant terms.
\end{proof}

\begin{proposition}[Expected Disagreement between Linear Labelling Functions]
Let \( f_s(x) = w_s^\top x + b_s \) and \( f_t(x) = w_t^\top x + b_t \) be scalar-valued linear labelling functions defined on inputs \( x \in \mathbb{R}^d \), and suppose the input distribution is Gaussian: \( x \sim \mathcal{N}(\mu_a, \Sigma_a) \). Define the parameter differences
\[
\Delta_w := w_s - w_t, \quad \delta_b := b_s - b_t
\]
Then the difference in label predictions is:
\[
z := f_s(x) - f_t(x) = \Delta_w^\top x + \delta_b \sim \mathcal{N}(\mu_{\text{tran}_a}, \sigma_{\text{tran}_a}^2)
\]
where:
\[
\mu_{\text{tran}_a} = \Delta_w^\top \mu_a + \delta_b, \quad \sigma_{\text{tran}_a}^2 = \Delta_w^\top \Sigma_a \Delta_w
\]
 
The expected disagreement under the input distribution \( p_a(x) \) is
\begin{align}
\mathbb{E}_{x \sim p_a}[|f_s(x) - f_t(x)|] = \sigma_{\text{tran}_a} \sqrt{\frac{2}{\pi}} \exp\left( -\frac{\mu_{\text{tran}_a}^2}{2 \sigma_{\text{tran}_a}^2} \right)
+ \nonumber\\
\mu_{\text{tran}_a} \left(1 - 2 \Phi\left(-\frac{\mu_{\text{tran}_a}}{\sigma_{\text{tran}_a}} \right) \right)
\label{eq:disagreement_pa}
\end{align}
where \( \Phi \) is the cumulative distribution function (CDF) of the standard normal distribution:
\[
\Phi(z) = \frac{1}{\sqrt{2\pi}} \int_{-\infty}^{z} e^{-t^2/2} \, dt.
\]
\label{prop:labelling disagreement}
\end{proposition}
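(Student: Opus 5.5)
The proof has two parts: identify the law of $z := f_s(x) - f_t(x)$, then compute the mean of a folded normal.

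\emph{Law of $z$.} Write $z = \Delta_w^\top x + \delta_b$, which is an affine image of $x \sim \mathcal{N}(\mu_a, \Sigma_a)$. Affine maps preserve Gaussianity; one can check this with the characteristic function $\mathbb{E}[e^{\mathrm{i}s z}]$, which has Gaussian form. So $z$ is Gaussian. Linearity of expectation gives its mean:
\[
\mathbb{E}[z] = \Delta_w^\top \mu_a + \delta_b = \mu_{\text{tran}_a}.
\]
The variance follows from $\operatorname{Var}(\Delta_w^\top x) = \Delta_w^\top \Sigma_a \Delta_w$, which gives $\sigma_{\text{tran}_a}^2$. If $\Delta_w^\top \Sigma_a \Delta_w = 0$, then $z$ is degenerate and $\mathbb{E}|z| = |\mu_{\text{tran}_a}|$. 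The stated formula recovers this value in the limit $\sigma_{\text{tran}_a} \to 0$. I would mention this case and then assume $\sigma_{\text{tran}_a} > 0$.

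\emph{Folded-normal mean.} Write $\mu = \mu_{\text{tran}_a}$ and $\sigma = \sigma_{\text{tran}_a}$, and set $z = \mu + \sigma u$ with $u \sim \mathcal{N}(0,1)$. Split the integral at the point $u_0 = -\mu/\sigma$, where $z$ changes sign:
\[
\mathbb{E}|z| = \int_{u_0}^{\infty} (\mu + \sigma u)\,\phi(u)\,du \;-\; \int_{-\infty}^{u_0} (\mu + \sigma u)\,\phi(u)\,du .
\]
Two elementary facts handle each piece. The constant part gives
\[
\int_{u_0}^\infty \phi = 1 - \Phi(u_0), \qquad \int_{-\infty}^{u_0} \phi = \Phi(u_0).
\]
Since $\phi'(u) = -u\,\phi(u)$, the linear part gives
\[
\int_{u_0}^\infty u\,\phi = \phi(u_0), \qquad \int_{-\infty}^{u_0} u\,\phi = -\phi(u_0).
\]
Combining these yields
\[
\mathbb{E}|z| = \mu\bigl(1 - 2\Phi(-\mu/\sigma)\bigr) + 2\sigma\,\phi(\mu/\sigma).
\]
Finally, $2\sigma\,\phi(\mu/\sigma) = \sigma\sqrt{2/\pi}\,\exp\!\bigl(-\mu^2/(2\sigma^2)\bigr)$, which is exactly \eqref{eq:disagreement_pa}.

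There is no substantive obstacle here. The only care needed is with sign conventions: splitting at $-\mu/\sigma$, and using the symmetry $\phi(-t) = \phi(t)$ to write $\phi(u_0) = \phi(\mu/\sigma)$.
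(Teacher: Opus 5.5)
Your proposal is correct and takes the same route as the paper, whose proof simply notes that $z$ is a univariate Gaussian and takes the mean of its absolute value (the folded-normal mean). You supply the omitted details: the affine image of a Gaussian, the split at $-\mu/\sigma$, and $\phi'(u) = -u\,\phi(u)$. Your remark on the degenerate case $\Delta_w^\top \Sigma_a \Delta_w = 0$, which the paper does not address, is a worthwhile addition.
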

\vspace{-10pt}
\begin{proof}
Follows from computing the expected value of the absolute value of a univariate Gaussian variable \( z \sim \mathcal{N}(\mu_{\text{tran}_a}, \sigma_{\text{tran}_a}^2) \).
\end{proof}
\begin{remark}
In this approximation, the `concepts' $f_s$ and $f_t$ are assumed to be deterministic. A non-deterministic case could be accommodated with the additional term $\Lambda$ as follows:
\[
\mu_{\text{tran}_a} = \Delta_w^\top \mu_a + \delta_b, \quad \sigma_{\text{tran}_a}^2 = \Delta_w^\top \Sigma_a \Delta_w + \Lambda
\]
where $\Lambda$ is the covariance term.
\end{remark}

\begin{remark}
With a similar argument as given in Prop. \ref{prop:labelling disagreement} one could obtain the expected disagreement over the target distribution as follows:

\begin{align}
\mathbb{E}_{x \sim p_t}[|f_s(x) - f_t(x)|] = \sigma_{\text{tran}_t} \sqrt{\frac{2}{\pi}} \exp\left( -\frac{\mu_{\text{tran}_t}^2}{2 \sigma_{\text{tran}_t}^2} \right)
+ \nonumber\\
\mu_{\text{tran}_t} \left(1 - 2 \Phi\left(-\frac{\mu_{\text{tran}_t}}{\sigma_{\text{tran}_t}} \right) \right)
\label{eq:disagreement_pt}
\end{align}
where,
\[
\mu_{\text{tran}_t} = \Delta_w^\top \mu_t + \delta_b, \quad \sigma_{\text{tran}_t}^2 = \Delta_w^\top \Sigma_t \Delta_w
.\]
Subsequently, by substituting Eq. \ref{eq:kl_gaus}, \ref{eq:disagreement_pa}, \ref{eq:disagreement_pt} in \ref{eq:useful_for_gaus_bound} one obtains the bound for the special Gaussian case. 
\end{remark}
The key takeaway from the isotropic case is that one must adjust the generative model in accordance with the human model to minimise the bound. The Proposition \ref{eq:correct_gen_dist_gaus} shows that $\Sigma_g$ and $\mu_g$ depend on both the target and human models. Therefore, the aim of the generative model is not to match the target distribution exactly, but rather to collaborate with the human to create an accepted distribution that approximates it.
\begin{figure}[b]
\centering
\begin{tikzpicture}[ scale=0.75, transform shape]
\node[datanode] (cvae) {Generative Model}; 
\node[datanode, right=of cvae] (agent) {AI-Agent \\
\tiny \( \pi(a|z, f_h) \))};
\node[datanode, below=of cvae] (data) {Data Generated\\
\tiny $\mathcal{D}_{gen}$};
\node[datanode, below=of data] (meta) {Learner\\ \tiny $\mathbb{E}_{(x, y) \sim \mathcal{D}_{\text{gen}}} \left[\ell(\mathcal{M}_\psi(x), y)\right]$};
\node[cvaenode, right=of meta] (hagent) {Human-Agent \\
\tiny $ f_h = \mathcal{F}(x_{\text{tr}}, x_{\text{dep}}, \text{Accuracy}).$};

\draw[->, thick] (agent) --  node[above]{ $T_a(z)$} (cvae);
\draw[->, thick] (cvae) -- node[left]{Sampling} (data);
\draw[->, thick] (data) -- node[left]{Generated Data} (meta);
\draw[->, thick] (hagent) to [bend right=20] node[left]{$f_h$} (agent);

\draw[->, thick] (data) to [bend left=20] node[left]{} (hagent);
\draw[->, thick, dashed] (meta) -- node[above]{$l(.)$} (hagent);
\draw[->, thick, dashed] (meta) to [bend right=20] node[left]{$l(.)$} (agent);
\end{tikzpicture}
    \caption{\textbf{GMHF Schematic.} The AI-Agent modulates the latent space via $T_a(z)$ to synthesize training data $\mathcal{D}_{\text{gen}}$ for a Meta-Learner. A Human-Agent filters deployment outcomes and provides feedback $f_h$. The policy $\pi$ is optimised using a composite reward $R$ derived from human feedback and learner loss. Solid lines denote mandatory information flow; dashed lines denote optional transfers.}
    \label{fig:algo_flowchart}
\end{figure}
\section{Proposed Algorithm}

\begin{algorithm}[!ht]
\caption{GMHF Meta-Learner Optimization}
\label{alg: GMHF}
\DontPrintSemicolon
\SetAlgoLined

\SetKwInOut{Input}{Input}
\SetKwInOut{Output}{Output}

\Input{Data distribution $p(x|y)$, initial parameters}
\Output{Optimized $\pi^{*}$ and $\mathcal{M}_\psi^{*}$}
\BlankLine

Initialize cNODE model $\mathcal{V}$\;
Train cNODE on $p(x|y)$\;
Generate datasets $x_{\text{tr}}, x_{\text{dep}}$\;
Initialize AI-agent policy $\pi(a|z, f_h)$\;
\BlankLine

\While{not converged}{
    
    \For{each episode}{
        AI-agent selects action $a$ from $\pi(a|z, f_h)$\;
        Apply transformation $T_a$ on latent variables $z \to z'$\;
        Generate new data $x' \sim p_\theta(x|z', y)$\;
        Train meta-learner $\mathcal{M}_\psi$ on $x_{\text{tr}}$\;
        Compute accuracy and feedback $f_h$\;
        Compute reward $R(x') = \lambda_a \cdot \text{Acc} + \lambda_h \cdot f_h$\;
        Optimize AI-agent policy $\pi(a|z, f_h)$\;
    } %
} %
\Return{$\pi^{*}, \mathcal{M}_\psi^{*}$}\;
\end{algorithm}
Motivated by Theorem \ref{thm:final bound}, which bounds deployment risk via the divergence terms in Lemma \ref{eq:KL expression}, we introduce the GMHF algorithm. By optimising a policy against the human acceptance probability (Proposition \ref{prop:acceptance dist}), the agent iteratively steers the generative process toward the target distribution, explicitly minimising the generalisation gap identified in our theoretical analysis.

It provides a mechanism for the human, modelled as an agent, to alter the latent parameters such that the accepted distribution matches the target distribution. This implicitly accounts for human understanding of the target distribution. The outcome can vary depending on the algorithm's parameters and the human expert's knowledge.
The algorithm consists of (a) a generative model, (b) an AI-Agent, (c) A meta-learner, and (d) a human model with three different levels of expert knowledge (EK). A flowchart describing the algorithm is shown in Fig. \ref{fig:algo_flowchart} and the pseudocode is provided in Algorithm \ref{alg: GMHF}. 
\paragraph{Generative Model:} The proposed cNODE architecture serves as inductive bias for the neural ODE, the weights of which are initialised, and then trained on the source datasets $\{(x, ~y) ~s.~t.~ (x, ~y) \sim p(x \lvert y )p(y)\}$ to generate the distribution $\mathcal{D}_g$ with a density $p_g(.)$ conditioned on the parameters of the ODE (The architecture diagram and a brief description is provided in the Appendix. Fig. \ref{fig:cNODE_network}).
\paragraph{AI agent:}The AI agent interacting with the environment is modelled as an MDP defined by the tuple $\mathcal{M} = (\mathcal{S}, \mathcal{A},~ T_a,~ r,~ \rho_0,~ \gamma)$. The latent variable of the generative model, which in this case is the parameter $\lambda$, forms the states $\mathcal{S}$ and $\mathcal{A}$, representing the actions. The state transition is governed by the function $T_a: \mathcal{S \times\mathcal{A}} \longmapsto \mathcal{S}$. 
The human provides reward as feedback, and the reward function is denoted by the mapping $r : \mathcal{S} \times \mathcal{A} \times \mathcal{S} \mapsto \{-1, ~1\}$. The starting state distribution is $ \rho_0$, and $\gamma$ denotes the discount
factor. A policy $\pi: \mathcal{S} \longmapsto \mathcal{A}$ is defined as a mapping from the states to actions. In the case we consider, the latent variables of the generative model are transformed by $T_a$ for every action generated by the AI-agent.
\paragraph{Human Model.} 
We model human feedback as a stochastic reward $r \in \{-1, 1\}$ conditioned on the perceived system configuration $\tilde{\theta}$. This parameter vector $\tilde{\theta}$ represents the governing factors, known to the human. The reward follows a Bernoulli distribution biased by the expert's reliability $p_h$:
\begin{align}
p(r=1\mid \tilde{\theta})&=\begin{cases}
p_h, & \tilde{\theta} \in \Theta_{\text{valid}},\\[6pt]
1-p_h, & \text{otherwise},
\end{cases}\\
p(r=-1\mid\tilde{\theta})&=1-p(r=1\mid\tilde{\theta}).
\end{align}
This formulation incentivises the agent to discover the physically plausible configuration space $\Theta_{\text{valid}}$. Formally, the effective reward distribution marginalises over the human's perception noise:
$p(r \mid \theta) = \int p(r \mid \tilde{\theta})p(\tilde{\theta} \mid \theta) \, d\tilde{\theta}$. For experimental evaluation, we quantify the level of Expert Knowledge (EK) directly via the reliability parameter, setting $\text{EK} = p_h$.
\paragraph{Meta-learner:} We employ the Reptile algorithm \cite{nichol2018first} to train a regressor that predicts system parameters from trajectory data $(x_{tr}, y_{tr})$. The meta-learner's zero-shot prediction accuracy on the target domain serves as the primary metric for evaluating the utility of the generated datasets.
\subsection{Conditional Neural ODE (cNODE) as a Causal Proxy}
\label{ssec:cNODE}

To enable precise control over the generative process, we employ a Conditional Neural ODE (cNODE) framework. Unlike standard Neural ODEs that learn a fixed dynamics function, we explicitly parametrise the vector field as $f(\mathbf{x}, \tau, \bm{\lambda}, \bm{\beta})$, where $\bm{\beta}$ represents the shared physical laws (e.g., conservation of momentum) and $\bm{\lambda}$ captures instance-specific properties (e.g., stiffness). The state evolution is given by:
\begin{align}
\mathbf{x}(t) = \mathbf{x}(t_0) + \int_{t_0}^{t} f(\mathbf{x}, \tau, \bm{\lambda}, \bm{\beta}) \, d\tau
\end{align}
We train this model to minimize the trajectory loss $\mathcal{L}(\bm{\beta})$ across a distribution of simulation parameters $p(\bm{\lambda})$:
\begin{align}
\mathcal{L}(\bm{\beta}) := \mathbb{E} \left[ \ell \left( \mathbf{x}(t_0) + \int_{t_0}^{t} f(\mathbf{x}, \tau, \bm{\lambda}, \bm{\beta}) \, d\tau, \; \mathbf{x}_{sim}(t) \right) \right]
\end{align}
Here, $\mathbf{x}_{sim}(t)$ denotes the ground-truth trajectories obtained from the numerical simulator corresponding to parameters $\bm{\lambda}$, and $\ell(\cdot)$ is a distance metric.

While conditioning ODEs is a known technique in scientific computing, we leverage it here to create a \textit{disentangled action space} for the RL agent. By embedding $\bm{\lambda}$ as an input, we force the network to learn the differential equation's structural form while exposing $\bm{\lambda}$ as a ``knob''. This turns the black-box generative process into a causal proxy, allowing the agent to traverse the physical manifold in directions that are interpretable to the human expert.

\begin{figure}[!h]
    \centering
    \includegraphics[trim={0.3cm 0.3cm 0.3cm 0.3cm},clip,width=0.7\linewidth]{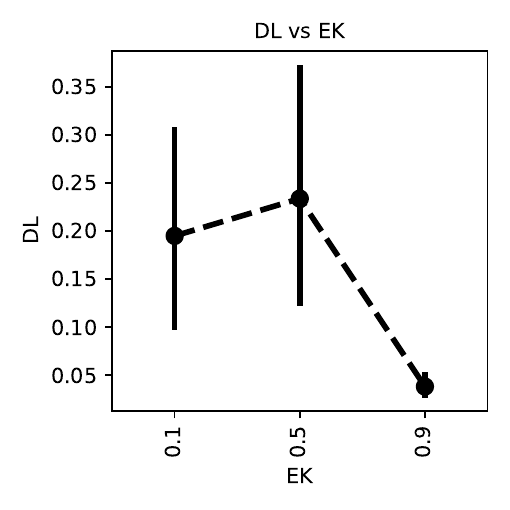}
    \caption{Deployment loss (DL) across expert-knowledge (EK) levels. EK values below 0.5 correspond to adversarial guidance, while values above 0.5 represent helpful human expertise. A marked decrease in deployment loss is observed as EK increases from 0.5 to 0.9, demonstrating the strong positive impact of supportive expert input on system performance.}
    \label{fig:EK}
\end{figure}

\begin{figure}[ht]
    \centering
    \begin{subfigure}{0.48\linewidth}
        \centering
        \includegraphics[width=\linewidth]{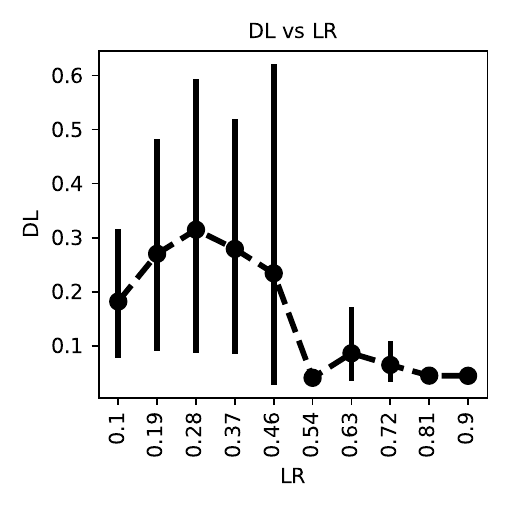}
        \caption{Meta-Learning Rate}
        \label{fig:LR}
    \end{subfigure}
    \hfill %
    \begin{subfigure}{0.48\linewidth}
        \centering
        \includegraphics[width=\linewidth]{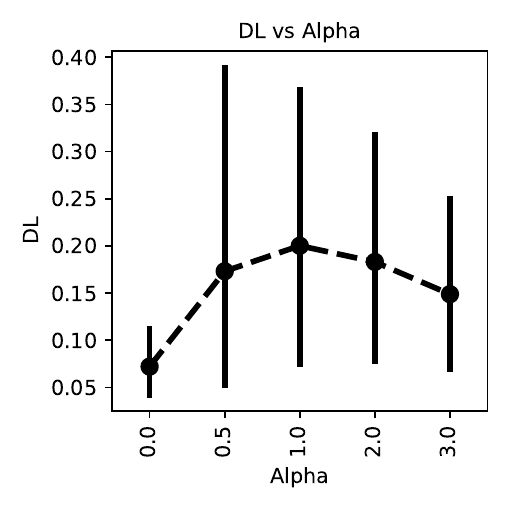}
        \caption{Nonlinearity ($\alpha$)}
        \label{fig:alpha}
    \end{subfigure}

    \caption{\textbf{Parameter Analysis} (a) Deployment loss (DL) minimizes at $LR \approx 0.54$, balancing adaptation speed with stability. (b) Loss varies non-monotonically with physical nonlinearity $\alpha$; it peaks at $\alpha \approx 1$ but decreases for stronger non-linearities ($\alpha > 1$), where high stiffness constrains the effective state-space.}
    \label{fig:sensitivity_analysis}
\end{figure}
\section{Experimental Results}
\label{sec:exp_results}
\paragraph{Model Problem} We evaluate our method on the Duffing oscillator, a standard benchmark for non-linear dynamics. The system evolution is governed by:
\begin{align}
    \dot{\mathbf{x}} = \underbrace{\begin{bmatrix} 0 & 1 \\ -k/m & -c/m \end{bmatrix}}_{\mathbf{K}} \mathbf{x} + \alpha \underbrace{\begin{bmatrix} 0 \\ x^3 \end{bmatrix}}_{\mathbf{f}(\mathbf{x})}
    \label{eq:spring_damper_eom}
\end{align}
where $\mathbf{x} = [x, \dot{x}]^\top$ represents the state (position and velocity). The mass is fixed ($m=1$), while the stiffness $k$ and damping $c$ constitute the parameter set $\bm{\lambda} = \{k, c\}$ used to generate diverse datasets. The coefficient $\alpha$ controls the system's nonlinearity; we vary $\alpha \in [0, 3]$ in increments of 0.5 to evaluate the framework's performance across regimes from linear to highly nonlinear.

We parameterise the vector field $f(\mathbf{x}, t, \bm{\lambda}, \bm{\beta})$ using a four-layer Multi-Layer Perceptron (MLP) with 64 hidden units per layer. To construct the training corpus, we sampled physical parameters from uniform distributions—specifically stiffness $k \sim \mathcal{U}[0, 10]$ and damping $c \sim \mathcal{U}[0, 1]$—and numerically integrated the dynamics (refer to the Eq. \ref{eq:spring_damper_eom}) with a fixed mass $m=1$.

Our design emulates a realistic deployment setting in which analytical governing equations are inaccessible, and the system must be learned solely from observational data. In this context, the cNODE acts as a data-driven surrogate model (or "Digital Twin"), capable of synthesising physically consistent trajectories under expert supervision. The model successfully captures diverse oscillator behaviours governed by the stiffness and damping parameters (See Appendix  Fig. \ref{fig:ts_node} for the samples). Duffing oscillator serves as a canonical proxy for complex real-world applications, modelled by damped harmonic oscillators, ranging from physiological applications to (e.g., cardiac rhythms) to structural health monitoring in mechanical systems \cite{fonkou2023heart, yin2006structural}.

We trained the agent using the Deep Deterministic Policy Gradient (DDPG) algorithm~\cite{lillicrap2015continuous}, implemented via Stable Baselines3~\cite{stable-baselines3}. The task is formulated as a continuous control problem with one-dimensional state and action spaces ($\mathcal{S}, \mathcal{A} \subseteq \mathbb{R}$). The state $s_t$ corresponds to the current stiffness parameter $k$, (refer to the Eq. \ref{eq:spring_damper_eom}), and the agent learns a deterministic policy $\pi(s_t)$ to adjust $k$ toward the target configuration. To isolate the impact of expert guidance, we set the weight $\lambda_a=0$ and feedback weight $\lambda_h=1$, effectively optimising the policy solely against the stochastic binary reward $r \in \{-1, 1\}$ from the human model.
\paragraph{Main Result: Impact of Expert Knowledge} 
The human model parameters $\tilde{\theta}$  is set to $k$, and the set $\Theta_{valid} = [0, ~10]$. To isolate the contribution of human guidance, we evaluate deployment loss across varying levels of expert knowledge (EK). As shown in Figure~\ref{fig:EK}, the system is highly sensitive to feedback quality. We observe that under adversarial ($EK=0.1$) or random ($EK=0.5$) guidance, the loss remains high with significant variance. However, improving reliability to $EK=0.9$ triggers a sharp transition, reducing loss considerably (to $<0.06$). This confirms that a threshold of high-fidelity expert feedback is necessary to successfully steer the generative process toward the target distribution. The outliers are removed before creating the plots in Fig. \ref{fig:EK}, \ref{fig:LR} and Fig. \ref{fig:alpha} for clarity.

\paragraph{Meta-Update} 
We analyse the sensitivity of the meta-learning rate $LR$, which governs the Reptile-based update rule: $\bm{\beta} \leftarrow \bm{\beta}+ LR(\tilde{\bm{\beta}} - \bm{\beta})$. As shown in Figure~\ref{fig:LR}, the system exhibits a sharp transition rather than a standard convex trade-off. At lower rates ($LR < 0.5$), the model suffers from high deployment loss and significant variance, indicating a failure to retain the physical constraints identified during the inner loop. We observe a critical threshold at $\beta \approx 0.54$, where the loss collapses to its minimum ($<0.05$). Beyond this point, the performance remains robustly stable, confirming that aggressive meta-updates are required to effectively bridge the domain gap.
\paragraph{Effect of System Nonlinearity} 
We further analysed the robustness of our method against the physical complexity of the underlying dynamics, controlled by the cubic nonlinearity parameter $\alpha$ (see Eq. \ref{eq:spring_damper_eom}). As shown in Figure~\ref{fig:alpha}, the linear case ($\alpha = 0$) presents the lowest difficulty, resulting in minimal deployment loss. As the system introduces nonlinearity, the prediction task becomes increasingly challenging, peaking at $\alpha \approx 1$ where the deviation from simple harmonic motion is most pronounced. Interestingly, for stronger non-linearities ($\alpha > 1$), the loss begins to decrease. This suggests that while moderate nonlinearity introduces complex anharmonic dynamics, significantly higher stiffness terms effectively constrain the system's effective state space by rapidly restoring the oscillator to equilibrium, acting as a stabilising force that facilitates easier adaptation for the meta-learner.

We further show, on the Duffing oscillator, that the agent reduces the generated-to-target divergence under reliable feedback (Appendix~\ref{sec:divergence}). Additional experiments on a non-dynamical probabilistic model demonstrate that GMHF is not restricted to ODE-governed time series (Appendix~\ref{sec:pgm_benchmark}).

\section{Discussion and Conclusion}
\label{sec:discussion}
The experimental results validate GMHF as a robust framework for bridging domain gaps in scientific machine learning. Inspired by the theoretical bounds from Section~\ref{sec:math_modelling} and with the empirical evidence obtained in Section \ref{sec:exp_results}, we derive critical insights into how the agent learns to collaborate with human experts.

The core innovation of GMHF is its fundamental shift in objective: rather than passively fitting a static dataset, the algorithm learns how to collaborate to synthesise a superior one. Unlike traditional surrogates that minimise reconstruction error against a fixed ground truth, our agent learns a policy $\pi(a|z, f_h)$ that actively manipulates latent parameters to satisfy a human utility function. This paradigm is theoretically anchored in Proposition~\ref{eq:correct_gen_dist_gaus}, which demonstrates that accurate target approximation requires the generator to align its density with the human's acceptance constraints. In our specific experimental setting ($\lambda_a=0$), the agent does not correct for human bias but rather converges to the expert's implicit physical model. The sharp reduction in loss at high reliability levels ($EK > 0.5$, Fig.~\ref{fig:EK}) empirically validates this mechanism: the agent leverages sparse feedback to navigate the latent space, effectively solving an analytically intractable inverse search problem for the expert. This establishes a collaborative equilibrium where the machine provides exhaustive exploration and the human provides physical verification—a synergy that achieves results neither could attain in isolation.

Our analysis of system nonlinearity points to a link between modelling parameters and statistical variance. The reduction in loss under strong nonlinearity ($\alpha > 1$, Fig.~\ref{fig:alpha}) can be explained via the isotropic KL divergence (Eq.~\ref{eq:kl_gaus}). High stiffness implies that the target distribution variance is reduced, and this also reduces the distribution of the acceptance distribution, which humans are implicitly aware of. This, among other effects, can amplify the penalty for the mean mismatch in Eq.~\ref{eq:kl_gaus}, increasing the loss. These low-variance regimes (very high and very low nonlinearity) can provide a steeper gradient for the meta-learner to learn efficiently, compared to the high-variance regime of the nonlinear system (medium nonlinearity).

The ablation study on meta-learning rates reveals a critical threshold behaviour rather than a trade-off. As shown in Figure~\ref{fig:LR}, conservative update rates ($LR < 0.5$) result in high deployment loss and significant variance. In our Reptile-based implementation, the meta-update acts as a gating mechanism for new physical knowledge; if $LR$ is too low, the agent effectively discards the valid constraints identified during the inner loop, failing to bridge the domain gap. We observe a transition at $LR \approx 0.54$, where the loss collapses to a minimum. This indicates that the meta-learner requires aggressive updates to effectively retain the expert's directional signal.

Two further results reinforce these conclusions. First, tracking the divergence between generated and deployment data on the Duffing oscillator (Appendix~\ref{sec:divergence}, Fig.~\ref{fig:wass_dist}) shows that reliable feedback drives this divergence down over training. This provides direct empirical support for the divergence-minimisation mechanism that Theorem~\ref{thm:final bound} bounds in principle, indicating that the theory informs the algorithm. Second, applying GMHF without architectural modification to a non-dynamical probabilistic model with an explicit source-to-target extrapolation gap (Appendix~\ref{sec:pgm_benchmark}) confirms that the approach is not specific to ODE-governed dynamics. Moreover, the meta-learning rate exhibits opposite optimal regimes across the two benchmarks, indicating that the framework adapts to the differing demands of each task rather than imposing a single fixed behaviour (Fig.~\ref{fig:pgm_DL_mlr}).
\section{Limitations and Future Work}
A primary limitation of our study is that, while it spans both a dynamical benchmark (the Duffing oscillator) and a non-dynamical one (the linear-cubic probabilistic model), both are simulated systems with low-dimensional latent parameters and a known ground truth. We adopted such systems because they permit rigorous validation of the algorithm against a controllable target. The framework is adaptable in principle to richer modalities. Future work could extend the approach to image or tabular data by replacing the generator with a domain-appropriate causal structure, such as a diffusion model for images or a Structural Causal Model (SCM) for tabular reasoning. A key challenge in those settings is that a controllable generator with a meaningful latent parameter, together with a reliable ground truth for training it, may be harder to obtain than in the synthetic systems studied here.

While GMHF successfully bridges the generalisation gap, it relies on two key assumptions: (1) the existence of a shared physical model between source and target domains (e.g., both are oscillators), and (2) the availability of a non-adversarial expert ($EK > 0.5$). Our experiments show that performance degrades sharply if the expert's reliability falls below random chance. Furthermore, the current interaction protocol assumes a stationary target; extending this to time-varying systems where parameters drift during deployment remains an open challenge for future research.

This work opens the door for more nuanced human-machine collaboration. While our current model relies on binary feedback and a probabilistic utility function, future iterations will incorporate complex human models capable of providing multimodal feedback (e.g., natural-language critiques of physical plausibility). Ideally $\tilde{\theta}$ depends on the observed trajectory, we simplify $\tilde{\theta} \approx k$ for experiments  (Eq.~\ref{eq:spring_damper_eom}). In future work, one could build a more realistic human model that generates the parameters $\tilde{\theta}$ from observations of trajectories. Furthermore, extending the ``stiffness as constraint'' insight to high-dimensional systems, such as fluid dynamics or cardiac modelling, remains a promising avenue for making intractable inverse problems solvable via human-guided meta-learning. 

\section*{Acknowledgments and Disclosure of Funding}
This work was supported by UKRI Turing AI World-Leading Researcher Fellowship (EP/W002973/1). SK was also supported by EU funding  ERC ODD-ML 101201120 and the Research Council of Finland Flagship programme: Finnish Center for Artificial Intelligence FCAI.

\bibliographystyle{plainnat}
\bibliography{refs}

\appendix
\section{Appendix}
\subsection{Proofs}
\label{Appendix:Proofs}

\begin{lemma}
\begin{align}
    D_{KL}(p_t \| p_a) &= 
\underbrace{\delta_h}_{\text{Human belief mismatch}} \nonumber \\ 
&-\underbrace{L_h}_{\text{Human latent inference}} \nonumber \\
&-\underbrace{L_g}_{\text{Generator data fit}} \nonumber \\ 
&+\underbrace{\log z}_{\text{Normalization}}
\end{align}

where,
\begin{align}
   L_h &:=\mathbb{E}_{p_t}[\log p_h(\tilde{\theta}|x)] \\
   L_g &:=\mathbb{E}_{p_t}[\log p_g(x|\theta)]\\
   \delta_h &:=D_{KL}(p_t \| \tilde{p}_t)
\end{align}

\end{lemma}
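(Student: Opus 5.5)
The plan is to expand the definition of the KL divergence and substitute the factorised form of $p_a$ from Proposition \ref{prop:acceptance dist}. Writing
\[
D_{KL}(p_t \| p_a) = \mathbb{E}_{p_t}[\log p_t(x)] - \mathbb{E}_{p_t}[\log p_a(x)],
\]
we take logarithms in Proposition \ref{prop:acceptance dist}:
\[
\log p_a(x) = \log p_h(\tilde{\theta}\mid x) + \log \tilde{p}_t(x) + \log p_g(x\mid\theta) - \log z .
\]
Since $z$ does not depend on $x$, its expectation under $p_t$ is just $\log z$. By linearity,
\[
-\mathbb{E}_{p_t}[\log p_a] = -L_h - \mathbb{E}_{p_t}[\log \tilde{p}_t] - L_g + \log z .
\]

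Next I combine the entropy-type term $\mathbb{E}_{p_t}[\log p_t]$ with $-\mathbb{E}_{p_t}[\log \tilde{p}_t]$. Together they give $\mathbb{E}_{p_t}[\log (p_t/\tilde{p}_t)] = D_{KL}(p_t\|\tilde{p}_t) = \delta_h$. Collecting terms yields exactly $\delta_h - L_h - L_g + \log z$, as claimed.

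The only genuine obstacle is well-definedness: each expectation must be finite, and the logarithms must make sense on the support of $p_t$, so that splitting the single integral into four separate integrals is legitimate.

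\begin{itemize}
\item Common Support gives $p_g(x\mid\theta)>0$ wherever $p_t(x)>0$, so $L_g$ involves no $\log 0$.
\item For $\tilde{p}_t$ and $p_h(\tilde{\theta}\mid x)$ I would state implicitly, as the lemma does, that they are positive $p_t$-a.e., i.e.\ $p_t \ll p_a$, and that $L_h$, $L_g$ and $\delta_h$ are finite.
\item Where any of these fails, both sides equal $+\infty$ under the usual conventions, and the identity is read in that extended sense.
\end{itemize}

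With these integrability conditions in place, the proof is the two-line computation above.
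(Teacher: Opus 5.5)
Your proof is correct and is the same computation as the paper's: substitute the factorised form of $p_a$ from the acceptance-density proposition into the KL integral, split by linearity, and regroup $\log p_t - \log \tilde{p}_t$ into $\delta_h$. Your integrability remarks are more careful than the paper, which ignores the issue, but your third bullet overstates things. If, say, $L_g = +\infty$ (possible for an unbounded generator density), the right-hand side can become $\infty - \infty$ rather than $+\infty$, so the identity is best stated only under the finiteness assumptions of your second bullet.
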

\begin{proof}
    \begin{align}
        D_{KL}(p_t (x) \mid \mid p_a(x)) 
        &=
        \int p_t(x) \log \frac{p_t (x)}{p_a(x)}  dx
    \end{align}
    This could be further expanded as,
    \begin{align}
        &= \int p_t(x) \log \frac{p_t(x) z}{p_h(\tilde{\theta} \lvert x)\tilde{p}_t(x)p_g(x \lvert \theta)} dx\\
        &= \int p_t(x) ( \log p_t(x) - \log p_h(\tilde{\theta} \lvert x) \nonumber \\
        &\quad - \log \tilde{p}_t(x) - \log p_g(x \lvert \theta) + \log z ) dx\\
        &= D_{KL}(p_t(x) \| \tilde{p}_t(x)) - \mathbb{E}_{p_t} \log p_h(\tilde{\theta} \lvert x) \nonumber \\
        &\quad - \mathbb{E}_{p_t} \log p_g(x \lvert \theta) + \log z
    \end{align}
\end{proof}

\begin{lemma}[Bounding the expectation term]
Let $p_t(x)$ be the target distribution and $p_a(x)$ the acceptance distribution as defined in previous assumptions. Let $h: \mathcal{X} \to \mathbb{R}$ be a bounded function such that $|h(x) - f_t(x)| \leq M$ for all $x \in \mathcal{X}$. Then, the expectation mismatch term:
\begin{align}
    \Delta := \mathbb{E}_{p_t}[(h(x) - f_t(x))] - \mathbb{E}_{p_a}[(h(x) - f_t(x))]
\end{align}
can be bounded as follows:
\begin{align}
    |\Delta| \leq M \sqrt{2 D_{KL}(p_t \| p_a)}.
\end{align}
\end{lemma}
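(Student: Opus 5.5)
The plan is to bound $|\Delta|$ by total variation and then apply Pinsker's inequality. Write $g(x) := h(x) - f_t(x)$, so that $|g(x)| \leq M$ for all $x$. Then
\begin{align}
|\Delta| = \left| \int g(x)\,\bigl(p_t(x) - p_a(x)\bigr)\,dx \right| \leq M \int |p_t(x) - p_a(x)|\,dx = 2M\, \mathrm{TV}(p_t, p_a),
\end{align}
where $\mathrm{TV}(p,q) := \tfrac{1}{2}\int |p-q|\,dx$. The only inputs are that $p_t$ and $p_a$ are probability densities and that $g$ is bounded and measurable, so both expectations are finite. Here the normalisation $z$ in Proposition \ref{prop:acceptance dist} guarantees that $p_a$ is a genuine density.

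Next I would invoke Pinsker's inequality, $\mathrm{TV}(p_t,p_a) \leq \sqrt{\tfrac{1}{2} D_{KL}(p_t\|p_a)}$. This gives
\begin{align}
|\Delta| \leq 2M\sqrt{\tfrac{1}{2}D_{KL}(p_t\|p_a)} = M\sqrt{2\,D_{KL}(p_t\|p_a)},
\end{align}
which is exactly the claimed constant. If $D_{KL}(p_t\|p_a)=\infty$, the bound holds trivially. Otherwise $p_t \ll p_a$, which is consistent with the Common Support assumption together with positivity of the human utility on the target support.

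The main obstacle is keeping the constants straight between the $L^1$ distance and the TV normalisation. Pinsker must be applied in the form $\|p_t-p_a\|_1 \leq \sqrt{2D_{KL}}$, and the sup bound $M$ must be applied to the centred integrand without losing a factor of two. A finer bound $M\|p_t-p_a\|_1/2$ is possible only for functions with range in an interval of length $M$. Here $g$ ranges over $[-M,M]$, an interval of length $2M$, so the crude estimate combined with the $L^1$ form of Pinsker is the right pairing. I would state Pinsker as a standard cited result rather than reprove it.
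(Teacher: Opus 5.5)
Your proposal is correct and matches the paper's proof: bound $|\Delta|$ by $M\|p_t-p_a\|_1 = 2M\,\mathrm{TV}(p_t,p_a)$, then apply Pinsker's inequality $\mathrm{TV}(p_t,p_a)\le\sqrt{\tfrac12 D_{KL}(p_t\|p_a)}$ to obtain $M\sqrt{2D_{KL}(p_t\|p_a)}$. Your extra remarks are also correct and go beyond what the paper states: the $D_{KL}=\infty$ case, and why the constant cannot be sharpened for an integrand ranging over $[-M,M]$.
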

\begin{proof}
\vspace{-10pt}
We start by expressing the expectation mismatch in terms of the total variation distance:
\begin{align}
    |\mathbb{E}_{p_t}[h(x) - f(x)] - \mathbb{E}_{p_a}[h(x) - f(x)]| \leq M \cdot 2 \cdot \text{TV}(p_t, p_a),
\end{align}
where the total variation distance is given by:
\begin{align}
    \text{TV}(p_t, p_a) = \frac{1}{2} \int |p_t(x) - p_a(x)| dx.
\end{align}
Applying Pinsker’s inequality:
\begin{align}
    \text{TV}(p_t, p_a) \leq \sqrt{\frac{1}{2} D_{KL}(p_t \| p_a)},
\end{align}
we obtain the final bound:
\begin{align}
    |\Delta| \leq M \sqrt{2 D_{KL}(p_t \| p_a)}.
\end{align}
\end{proof}

\begin{theorem}
For a hypothesis $h$, the target error $\epsilon_t(h)$ and the source error $\epsilon_s(h)$, one could modify the bound for the target error as follows, where, $h: \mathcal{X} \to \mathbb{R}$ be a bounded function such that $|h(x) - f_t(x)| \leq M$.
    \begin{align}
        \epsilon_t(h) &\leq  \epsilon_s(h) +\min \{ \mathbb{E}_{p_a}[\mid f_s(x) - f_t(x)\mid ], \nonumber \\
        &\quad \mathbb{E}_{p_t}[\mid f_s(x) - f_t(x)\mid ]\} \nonumber \\
        &\quad+M\sqrt{2 (\delta_h - L_h - L_g + \log z)}
    \end{align}

    \begin{proof}
    From lemma \ref{eq:KL expression}, \ref{eq: shai dist shift} and \ref{eq: bound expectation difference}
    \begin{align}
        \epsilon_t(h) &\leq \epsilon_s(h) + \min \{ \mathbb{E}_{p_a}[\mid f_s(x) - f_t(x)\mid ],\nonumber\\&\quad \mathbb{E}_{p_t}[\mid f_s(x) - f_t(x)\mid ]\} \nonumber\\&\quad +\mid \mathbb{E}_{p_t}(h(x)-f_t(x)) - \mathbb{E}_{p_a}(h(x)-f_t(x)) \mid\\ \nonumber
        &\leq \epsilon_s(h) + \min \{ \mathbb{E}_{p_a}[\mid f_s(x) - f_t(x)\mid ],\nonumber\\&\quad \mathbb{E}_{p_t}[\mid f_s(x) - f_t(x)\mid ]\}\\ &\quad +M\sqrt{2 D_{KL}(p_t \| p_a)} \label{eq:useful_for_gaus_bound}\\
        &= \epsilon_s(h) +\min \{ \mathbb{E}_{p_a}[\mid f_s(x) - f_t(x)\mid ],\nonumber\\&\quad \mathbb{E}_{p_t}[\mid f_s(x) - f_t(x)\mid ]\} \nonumber \\ &\quad+M\sqrt{2 (\delta_h - L_h - L_g + \log z)}
    \end{align}
    \end{proof}
\end{theorem}

\subsection{Architecture of the cNODE and Samples}
\label{sec:cnode_arch}
This section provides the architecture diagram of the cNODE (Appendix Fig. \ref{fig:cNODE_network}) and the trajectories it generates (Appendix Fig. \ref{fig:ts_node}).
\begin{figure}[!h]
    \centering
    \includegraphics[trim={0.8cm 1.5cm 0.8cm 1.6cm},clip, height=8cm]{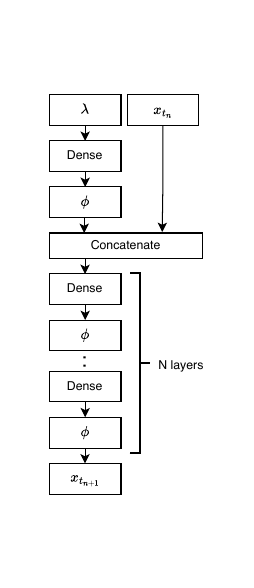}
    \caption{Architecture of the Conditional Neural ODE (cNODE). The network learns the vector field conditioned on system parameters $\bm{\lambda}$, allowing the generation of trajectories that respect varying physical properties}
    \label{fig:cNODE_network}
\end{figure}
\label{ssec:Simulator}
\begin{figure}[h]
    \centering
    \includegraphics[width=3.5in]{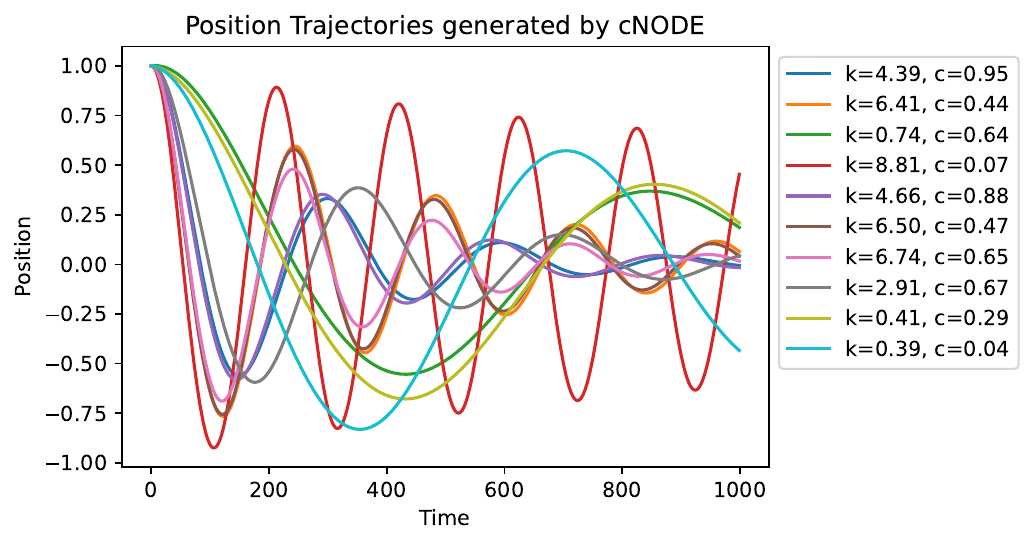}
    \caption{Position trajectories generated by the cNODE for varying stiffness ($k$) and damping ($c$) parameters. The distinct curves demonstrate the model's ability to generalise across different physical regimes while keeping mass fixed at $m=1$ and $\alpha=0.5$.}
    \label{fig:ts_node}
\end{figure}
\section{Divergence Analysis}
\label{sec:divergence}

The theoretical analysis (Lemma~\ref{eq:KL expression}) bounds deployment error in terms of the divergence $D_{KL}(p_t \| p_a)$ between the target distribution and the distribution of human-accepted samples. The GMHF algorithm is constructed to minimise this divergence implicitly. Under reliable feedback, the agent is expected to guide the generative process so that accepted samples increasingly resemble the deployment distribution. To assess whether this mechanism operates in practice, and to determine whether the theory genuinely informs the algorithm rather than merely motivating it, the divergence between the generated data and the deployment distribution is tracked throughout training.

Figure~\ref{fig:wass_dist} presents the Wasserstein distance between generated samples at each episode and a reference sample from the deployment distribution, under both reliable ($EK = 0.9$) and uninformative ($EK = 0.5$) expert feedback. With reliable feedback, the divergence decreases over training as the agent learns to generate data aligned with the deployment regime. In contrast, under uninformative feedback, the divergence remains high and shows no consistent downward trend. These results provide direct empirical evidence that the human feedback channel guides the generative process toward the target distribution, consistent with the divergence-minimisation mechanism identified in the theoretical analysis.

\begin{figure}
    \centering
    \includegraphics[width=0.7\linewidth]{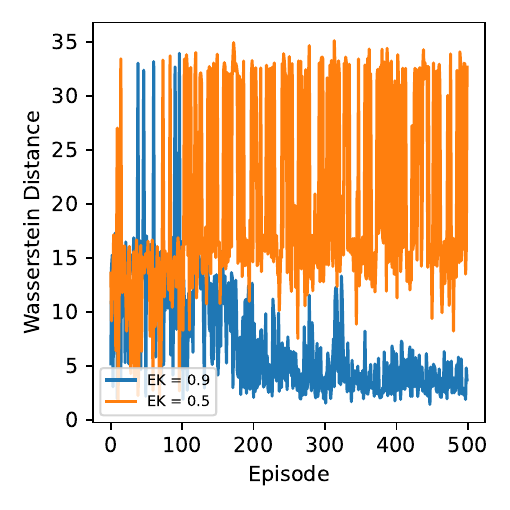}
    \caption{Wasserstein distance between generated and deployment data as a function of training episode, for reliable ($EK = 0.9$) and uninformative ($EK = 0.5$) expert feedback. Reliable feedback drives the generated distribution toward the deployment distribution, reducing the divergence; uninformative feedback does not.}
    \label{fig:wass_dist}
\end{figure}
\section{Additional experiments: A probabilistic model}
\subsection{Linear-cubic probabilistic model benchmark}
\label{sec:pgm_benchmark}

To complement the dynamical Duffing oscillator and demonstrate that GMHF is not tied to ODE-governed systems, we introduce a second benchmark based on a static probabilistic model with a coupled latent structure. Each task is parametrised by a task-level latent slope $m$ and offset $c$, and consists of $N = 20$ paired observations $(y_i, z_i)_{i=1}^{N}$ generated from a shared latent input $x_i$. The full generative model is
\begin{align}
m &\sim \mathcal{U}(m_a, m_b) \\
c &\sim \mathcal{U}(c_a, c_b) \\
x_i &\sim \mathcal{N}(0, \sigma_x^2) \\
\varepsilon_i^{(y)}, \, \varepsilon_i^{(z)} &\sim \mathcal{N}(0, \sigma_\varepsilon^2) \\
y_i &= m \, x_i + c + \varepsilon_i^{(y)}
 \label{eq:y_i}\\
z_i &= m^3 \, x_i + c + \varepsilon_i^{(z)}
\label{eq:z_i}
\end{align}
for $i = 1, \ldots, N$, with $\sigma_x = 2$ and $\sigma_\varepsilon = 0.5$. The meta-learner's task is to recover the latent input $x_i$ from the observation pair $(y_i, z_i)$, treating $m$ as the latent regime parameter modulated by the RL agent under human feedback, and $c$ as a nuisance variable that must be implicitly marginalised.

\paragraph{Source and target distributions.}
We instantiate the source and target regimes so that deployment lies strictly outside the source support of $m$:
\begin{align}
\text{Source:} \quad & m \sim \mathcal{U}(0.5, 1.0), \quad c \sim \mathcal{U}(0, 20) \\
\text{Target:} \quad & m \sim \mathcal{U}(1.5, 1.7), \quad c \sim \mathcal{U}(0, 20)
\end{align}
This creates a genuine extrapolation gap: a meta-learner exposed only to source-regime slopes must rely on the GMHF agent, steered by human feedback, to generate training data at slope values the source distribution would not produce. The offset $c$ is shared across regimes and acts purely as a nuisance variable.

\paragraph{Identifiability under regime shift.}
Recovering $x_i$ requires eliminating the unknown offset $c$, achieved by subtracting Eq.~\ref{eq:y_i} from Eq.~\ref{eq:z_i}:
\begin{align}
z_i - y_i &= (m^3 - m) \, x_i + \big( \varepsilon_i^{(z)} - \varepsilon_i^{(y)} \big),
\end{align}
so the signal carrying $x_i$ scales with $|m^3 - m|$. This coefficient is small in the source regime and substantially larger at the target, so the optimal $(y_i, z_i) \to x_i$ mapping differs between regimes. A meta-learner trained only on source data must therefore extrapolate, unless the agent is guided to explore the target regime --- the role of the human feedback channel.

\paragraph{The generative model}
We note that for this benchmark the generative model shares the exact functional form of the true data-generating process; the outcome is therefore independent of a misspecified generator and the agent instead steers the latent slope $m$ toward the deployment regime. This isolates the latent-steering role of GMHF from the additional challenge of generator misspecification present in the Duffing setting, where the cNODE is a learned approximation of the true dynamics.
\paragraph{Reference conditions via expert-reliability ablation.}
Rather than introducing external baselines that would require different architectural assumptions, we use the expert reliability $EK$ itself to define reference conditions, keeping every component of the system identical except the reward signal. At $EK = 0.5$ the human feedback carries no information about the underlying parameter, so the system reduces to GMHF with effectively random reward, a natural no-information baseline. At $EK = 0.1$ the feedback is consistently inverted, and at $EK = 0.9$ the human supplies reliable guidance. Figure~\ref{fig:pgm_DL_EK} reports deployment loss across these three regimes: high expert reliability ($EK = 0.9$) yields substantially lower deployment loss than the no-information condition ($EK = 0.5$), isolating the contribution of expert knowledge.

Figure~\ref{fig:pgm_DL_mlr} shows the sensitivity of deployment loss to the meta-learning rate. Here the optimal regime is reversed relative to the Duffing oscillator: low rates yield the lowest loss, and performance degrades at higher rates. Both settings exhibit a sharp transition, but in opposite directions. This reversal is consistent with the differing demands of the two tasks: the Duffing task rewards retaining per-task structure (favouring aggressive updates), whereas the probabilistic task requires averaging across tasks to marginalise the shared nuisance offset $c$ (favouring conservative updates).

\begin{figure}[t]
    \centering
    \includegraphics[width=0.7\linewidth]{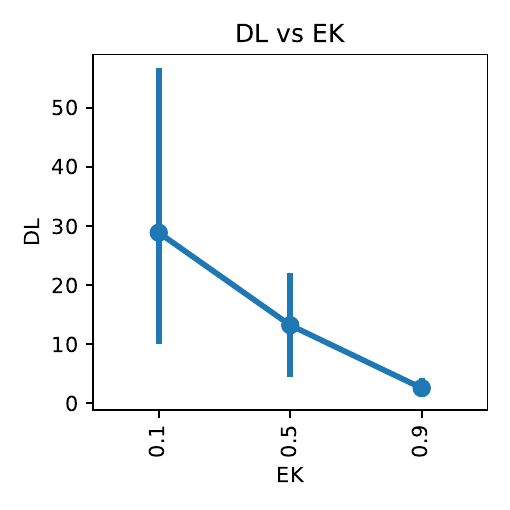}
    \caption{Deployment loss versus expert knowledge ( $EK= p_h$) on the linear-cubic probabilistic benchmark. Reliable expert feedback ($EK = 0.9$) achieves the lowest deployment loss, while random feedback ($EK = 0.5$) provides no useful signal. Points show the mean across meta-learning-rate settings; bars indicate variability.}
    \label{fig:pgm_DL_EK}
\end{figure}

\begin{figure}[!h]
    \centering
    \includegraphics[width=0.7\linewidth]{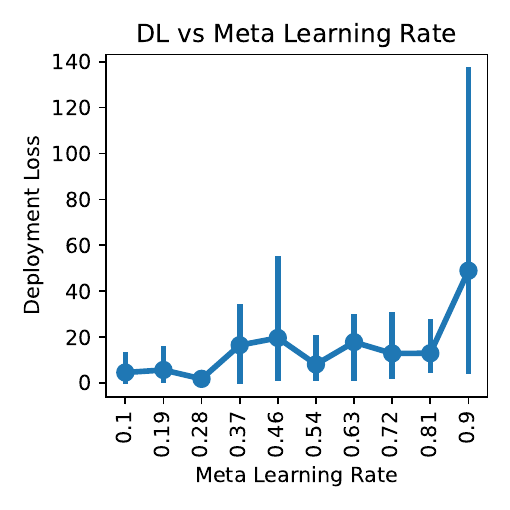}
    \caption{Deployment loss versus meta-learning rate on the linear-cubic probabilistic benchmark. In contrast to the Duffing oscillator (Fig.~\ref{fig:LR}), performance is best at low meta-learning rates and degrades at higher rates. Both benchmarks exhibit a sharp transition, but in opposite directions. Points show the mean across expert-knowledge settings; bars indicate variability.}
    \label{fig:pgm_DL_mlr}

\end{figure}

\paragraph{Additional experimental details}
The agent is implemented using the Deep Deterministic Policy Gradient (DDPG) algorithm~\citep{lillicrap2015continuous} via the Stable-Baselines3 library~\citep{stable-baselines3}, with an MLP policy with 2 layers of 64 neurons each. The task is formulated as a continuous control problem with one-dimensional state and action spaces. The state $s_t$ corresponds to the current latent parameter (stiffness $k$ for the Duffing oscillator; slope $m$ for the linear-cubic model), and the deterministic transition is $s_{t+1} = s_t + a_t$, where $a_t$ is the agent's action. The action space is bounded to $[-1, 10]$ for the Duffing oscillator and $[-0.5, 0.5]$ for the linear-cubic model; the observation space is $[-100, 100]$. Each episode terminates when the state exits the valid range or the human returns a positive reward. The agent is trained for $500$ environment steps. To isolate the contribution of human feedback, we set the accuracy weight $\lambda_a = 0$ and the feedback weight $\lambda_h = 1$, so the reward reduces to the binary human signal $r \in \{-1, +1\}$. 

For the linear-cubic model, the human returns a positive reward when the slope lies in the target band $1.4 < m < 1.8$, and each episode terminates when $m$ exits the range $[0.1, 2.5]$.

\end{document}